\documentclass[10pt]{article}
\expandafter\let\csname equation*\endcsname\relax
\expandafter\let\csname endequation*\endcsname\relax
\usepackage{bm}
\usepackage{pifont}
\usepackage{enumerate}
\usepackage[top=1in, bottom=1in, left=1in, right=1in]{geometry}
\usepackage[dvipsnames]{xcolor}
\usepackage{mathrsfs}
\usepackage{amsfonts}
\usepackage{multirow}
\usepackage{upgreek}
\usepackage{nicefrac}  
\usepackage{amssymb,dsfont}
\usepackage{caption,comment}
\usepackage{blkarray}
\usepackage{amsmath}
\usepackage{float}
\usepackage{footnote}
\usepackage{amssymb,amsthm}
\usepackage[toc,page]{appendix}
\usepackage{graphicx,afterpage}
\usepackage{epstopdf}
\usepackage[normalem]{ulem}
\usepackage{algorithm}
\usepackage{algorithmic}
\usepackage{xspace}
\usepackage{enumitem}
\usepackage{authblk}
\usepackage{hyperref}
\usepackage{natbib}

\usepackage{enumitem}
\usepackage{nicefrac}

\def\real{\mathbb{R}}

\newcommand{\cP}{\mathcal P}

\newcommand{\KL}{\mathrm{KL}}

\def\F{\textrm{FR}}

\def\W{\textrm{W}}

\usepackage{tikz}
\usetikzlibrary{positioning}

\newtheorem{assumption}{Assumption}
\newtheorem{theorem}{Theorem}
\newtheorem{lemma}{Lemma}
\newtheorem{remark}{Remark}
\newtheorem{proposition}{Proposition}

\newtheorem{example}{Example}

\title{Preservation of Log-Concavity and Convergence of Wasserstein--Fisher--Rao Gradient Flows}
\date{}

\author[1]{Francesca Romana Crucinio\thanks{ \href{mailto:francescaromana.crucinio@unito.it}{francescaromana.crucinio@unito.it}
   }}
\author[2]{Sahani Pathiraja\thanks{  \href{mailto:s.pathiraja@unsw.edu.au}{s.pathiraja@unsw.edu.au} } }
\date{ }

\affil[1]{ESOMAS, University of Turin, Italy \& Collegio Carlo Alberto, Turin, Italy}
\affil[2]{School of Mathematics \& Statistics, UNSW Sydney, Australia}

\begin{document}

    \maketitle

\begin{abstract}

We study the convergence of Wasserstein--Fisher--Rao (WFR) gradient flows for sampling from probability distributions known up to a normalisation constant. By combining Wasserstein transport with Fisher--Rao birth--death dynamics, WFR flows balance exploration and selection.  These flows have been recognised as a promising mechanism to accelerate convergence beyond Langevin dynamics. We show that for a class of strongly log-concave target distributions satisfying additional curvature conditions, WFR flows preserve strong log-concavity, in contrast to Wasserstein flows which enjoy this property only in the Gaussian setting. Exploiting this result, we derive explicit non-asymptotic convergence rates for the symmetrised Kullback--Leibler divergence, without requiring a warm-start as required in current estimates. In particular, we show that the convergence rate decomposes additively into Wasserstein and Fisher--Rao contributions, thereby confirming a recent conjecture within this setting. These results provide refined convergence guarantees and further develop the theoretical foundations of WFR gradient flows for sampling and Bayesian inference.

\end{abstract}


\section{Introduction}

We consider the task of generating samples from a target probability distribution known up to a normalisation constant with density $\pi(x) \propto e^{-V_\pi(x)}, \enskip x \in \mathbb{R}^d$.  Despite the conceptual simplicity of this task, its efficient implementation when $V_\pi$ is multi-modal, the underlying space is high dimensional and/or modes are separated by large distances remains challenging.   Given the broad application of sampling to (Bayesian) statistics and statistical machine learning, several avenues have been investigated to derive efficient and scalable algorithms.

A natural way to formulate this task is via gradient flows, which can be seen as optimisation of a functional measuring the dissimilarity to $\pi$, typically the Kullback--Leibler (KL) divergence \citep{wibisono2018sampling, crucinio2025note, chen2023sampling}.
This formulation yields considerable freedom in the design of sampling algorithms; arguably the most well-known being the so-called Wasserstein Gradient flow, hereafter W flow \citep{jordan1998variational}.  
It is well-known that when $\pi$ satisfies a Log-Sobolev Inequality (LSI), the W flow converges exponentially fast to $\pi$, with rate depending on the Log-Sobolev constant.  This highlights an inherent limitation of W flows, namely that when the LSI constant is large (e.g. as is typically the case in multi-modal densities with well separated modes), the convergence rate can be prohibitively slow \citep{schlichting2019poincare}.

Recent research efforts have instead considered gradient flows in the Fisher--Rao geometry (FR flow).  These are well known in the biological literature as describing the macroscopic properties of a population with varying traits or species, and is sometimes referred to as birth-death or replicator dynamics \citep{kimura_stochastic_1965,schuster1983replicator, Cressman2006}. 

It is known that FR flows are intimately connected to mirror descent \citep{chopin2023connection}, stochastic filtering \citep{akyildiz2017probabilistic, halder_gradient_2017, Pathiraja2024, DelMoral1997} and sequential Monte Carlo \citep{us}.  They have also recently been exploited to develop sampling algorithms \citep{Nusken2024, maurais2024sampling, Wang2024, chen2023sampling, lu2023birth, Lu2019}, most notably due to the fact that it is possible to achieve convergence rates independent of the properties of $V_\pi$ \citep{carrillo_fisher-rao_2024, Lu2019}.

In this work, we consider the so-called Wasserstein--Fisher--Rao (WFR) gradient flow, wherein the metric is given by the direct sum of the Wasserstein and Fisher-Rao metrics.
We refer the reader to \citep{liero_optimal_2018} for a rigorous treatment of this metric and its corresponding gradient flow. 
WFR gradient flows combine the diffusive behaviour of W flows with the birth-death or reactive properties of FR flows and enjoy better convergence properties than both W and FR alone \citep{Lu2019, chen2023sampling}.  They have a natural interpretation as combining `exploration' or `mutation' to provide new particles (W flow) with `selection' where particles that are a poor fit to the target are killed \citep{Pathiraja2024}.  The convergence properties of WFR flows have been theorised to improve on both W and FR flows, yet the best known results are obtained under strong conditions on the ratio between $\pi$ and the initial distribution $\mu_0$ and require a warm-start condition (\citet[Appendix B]{Lu2019} and \citet[Remark 2.6]{lu2023birth}).

We consider convergence of the WFR gradient flow for strongly log-concave targets. Under the curvature assumptions stated below, we obtain precise decay rates for the symmetrised KL without a warm-start condition. Te key step is a strong log-concavity preservation result for the WFR flow. This is in stark contrast with the W flow, which is known to preserve log-concavity uniformly in time only in the Gaussian case \citep{Kolesnikov2001}.

Our main contributions are as follows:
\begin{itemize}
    \item We establish conditions under which the WFR preserves strong log-concavity (Section~\ref{sec:logconcave}). This result combines a finite time horizon result on log-concavity preservation for the W flow with the strong regularising properties of the FR flow.
    \item We obtain a non-asymptotic convergence result for the continuous time WFR flow which shows that the rate of convergence is the sum of the rate of the W flow and that of the FR flow (Section~\ref{sec:convlogconc}), as conjectured in \cite{domingo-enrich2023an}. 
\end{itemize}

\textbf{Notation}
We define some notation that will be used throughout the manuscript.
For all differentiable functions $f$ we denote the gradient by $\nabla f$. Furthermore, if $f$ is twice differentiable we denote by $\nabla^2f$ its Hessian and by $\Delta f$ its Laplacian. 
 We denote by $\cP(\real^d)$ the set of probability measures over
$\mathcal{B}(\real^d)$, and endow this space with the topology of weak convergence. 
We denote by $\cP_2^{ac}(\real^d)$ the manifold of absolutely continuous probability measures on $\real^d$ with finite second moment. Every $p\in \cP_2^{ac}(\real^d)$ will be identified with its (Lebesgue) density $p(x)$.  Throughout this manuscript we denote the target by $\pi$ and assume $\pi(x) \propto e^{-V_\pi(x)}$. The initial distribution is denoted by $\mu_0$.  
The Kullback--Leibler divergence is defined for $\nu,\mu$ admitting a density w.r.t. Lebesgue as $\KL(\nu||\mu)=\int \log(\nicefrac{\nu(x)}{\mu(x)}) \nu(x)dx$.

\section{Wasserstein--Fisher--Rao Gradient Flow}

The gradient flow of $\KL(\mu || \pi)$ w.r.t. the geometry induced by the Wasserstein--Fisher--Rao distance is given by the following PDE \citep[Theorem 3.1]{Lu2019}
\begin{align}
    \label{eq:WFRpde}
    \partial_t \mu_t &= f_\W(\mu_t)+f_{\F}(\mu_t),\\
    f_\W(\mu) &:= \nabla \cdot (\mu \nabla \log \frac{\mu}{\pi}), \label{eq:winfflow}\\
    f_{\F}(\mu) &:= -\mu \left(\log \frac{\mu}{\pi} -  \mathbb{E}_\mu \left[ \log \frac{\mu}{\pi} \right]  \right).\label{eq:infFR}
\end{align}
where $f_\W$ and $f_{\F}$ are the Wasserstein and Fisher--Rao operators respectively. 
This PDE is a combination of the Wasserstein (W) flow and of the Fisher--Rao (FR) flow.  While the W flow cannot be solved analytically, the FR flow admits a closed form solution given by (e.g. \citet[App. B.1]{chen2023sampling}, \citet[Eq. (16)]{lu2023birth})
\begin{align}
\label{eq:fr_semigroup}
   \mu_t^{\textrm{FR}}(x)\propto \pi(x)^{1 - e^{-t}} \mu_0(x)^{e^{-t}}.
\end{align}

The rate of decay of $\KL$ for the WFR flow is obtained considering the time derivative of $\KL(\mu_t||\pi) $ along the flow via classical arguments valid under log-concavity assumptions, 
\begin{align}
\label{eq:derivative_kl}
\frac{d}{dt} \KL(\mu_t||\pi) &=\int \log \frac{\mu_t}{\pi} \partial_t  \mu_t= - \int \mu_t\vert\nabla\log \frac{\mu_t}{\pi}\vert^2 -\text{Var}_{\mu_t} \left[ \log  \frac{\mu_t}{\pi}  \right].
\end{align}
As observed in \citet[page 13]{gallouet2017jko}, the first term corresponds to the negative gradient of $\KL$ w.r.t. the Wasserstein-2 metric, while the remaining terms give the negative gradient of $\KL$ w.r.t. the Fisher--Rao geometry, implying that the dissipation for the WFR metric is the sum of the W and the FR dissipation.
The instantaneous WFR dissipation is the sum of the W and FR dissipations and is therefore at least as large as either component dissipation
$$\KL(\mu_t||\pi)\leq \min\left\lbrace \KL(\mu_t^{\textrm{FR}}||\pi), \KL(\mu_t^{\textrm{W}}||\pi)\right\rbrace,$$
where $\mu_t^{\textrm{FR}}, \mu_t^{\textrm{W}}$ denote the solutions of the FR and W flow PDEs respectively.

Convergence of the W flow is guaranteed under a Log-Sobolev assumption
\begin{align}
    \label{eq:lsi}
    \KL(\mu||\pi)\leq \frac{\lambda_\pi}{2}\int \mu \left\lvert\nabla\log \frac{\mu}{\pi}\right\rvert^2:= \frac{\lambda_\pi}{2}\mathcal{I}(\mu||\pi),
\end{align}
where $\lambda_\pi<\infty$ denotes the Log-Sobolev constant of $\pi$ and $\mathcal{I}(\mu||\pi) $ is the relative Fisher information between $\mu$ and $\pi$. This assumption also guarantees that the W flow is well-defined.
Under~\eqref{eq:lsi}, we can obtain a bound on the decay of KL along the W flow (e.g., \cite{chewi2024analysis})
\begin{align*}
    \KL(\mu_t^{\textrm{W}}||\pi) \leq e^{-2\lambda_\pi^{-1}t}\KL(\mu_0||\pi).
\end{align*}

To the best of our knowledge, the convergence result in KL obtained under the weakest assumptions on $\pi, \mu_0$ is given in \citet[Eq. (B.6)]{chen2023sampling} and requires that $\mu_0, \pi$ have bounded second moments $\int |x|^2\mu_0(x)dx \leq B$, $\int |x|^2\pi(x)dx \leq B$
 and $\left|\log\nicefrac{\mu_0(x)}{\pi(x)}\right|\leq M(1+|x|^2)$
for some $M>0$. Under these assumptions, we have
\begin{align*}
\KL(\mu_t^{\textrm{FR}}||\pi)\leq Me^{-t}(2+B+Be^{M e^{-t}(1+B)}).
\end{align*}

The resulting WFR rate is
\begin{align}
\label{eq:rate_wfr}
\KL(\mu_t||\pi)\leq \min\left\lbrace e^{-2\lambda_\pi^{-1}t}\KL(\mu_0||\pi), Me^{-t}(2+B+Be^{M e^{-t}(1+B)})\right\rbrace.
\end{align}
This rate is never worse than that of W or FR, looking at~\eqref{eq:derivative_kl}, however it is evident that the rate of decay of the WFR flow should be considerably higher than that of the two singular flows, and in particular it should be the sum of the singular rates as conjectured in \cite{domingo-enrich2023an}.

A sharper rate of converge for WFR can be obtained assuming $\log\nicefrac{\pi(x)}{\mu_0(x)}\geq M$ and a warm-start condition $\KL(\mu_0|\pi)\leq 1$ \citep[Appendix B]{Lu2019}
\begin{align}
\label{eq:sharp_wfr}
  \KL(\mu_t||\pi) &\leq  e^{-\left(2\lambda_\pi^{-1}+(2-3\delta)\right)(t-t_0)}\KL(\mu_0||\pi),
\end{align}
for all $t\geq t_0:=\log(M/\delta^3)$ and $\delta>0$.
This result shows that the decay in the case of the WFR flow is faster than that of the W flow but requires a warm-start condition, moreover the rate is only valid for $t\geq t_0:=\log(M/\delta^3)$ which grows as $\delta\to 0$ making~\eqref{eq:sharp_wfr} sharp only for large $t$.

As a check of the sharpness of the above rates we consider a 1D Gaussian target for which the exact KL decay is available as shown in \cite{us_splitting} (Figure~\ref{fig:rate}). Even in this simple case, the rates available in the literature become sharp only for large $t$.
\begin{figure}
	\centering
	\begin{tikzpicture}[every node/.append style={font=\normalsize}]
\node[node distance = 0, xshift = -1cm] {\includegraphics[width = 0.5\textwidth]{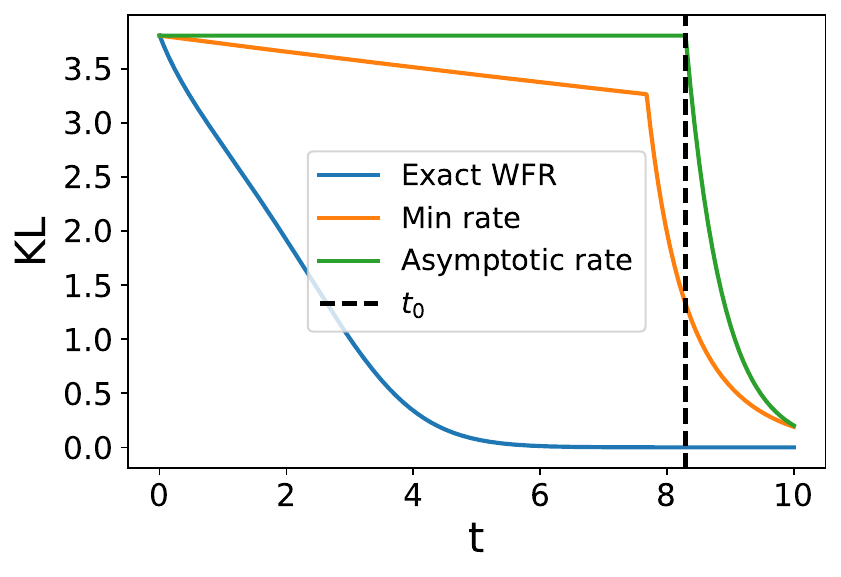}};
	\end{tikzpicture}
\caption{Comparison of exact KL decay (blue) compared to rates in the literature,~\eqref{eq:rate_wfr} (orange) and~\eqref{eq:sharp_wfr} (green) with $\delta = 0.1, t_0 = 8.3$ for a 1D Gaussian with $m_\pi = 20, C_\pi  = 100, m_0 = 0, C_0 = 1$.}

\label{fig:rate}
\end{figure}

\section{Preservation of log-concavity}
\label{sec:logconcave}

As a first result towards establishing convergence of the WFR flow we obtain conditions under which the W, FR and WFR preserve log-concavity when initialised from a log-concave initial distribution $\mu_0$. 
Recall that for any log-concave $\mu_0$, preservation of log-concavity uniformly in time under the W flow is guaranteed only for $\pi$ Gaussian (i.e. for Ornstein-Uhlenbeck Semigroups) due to \cite{Kolesnikov2001}. This is in stark contrast to the FR flow, where log-concavity is preserved uniformly in time for any strongly log-concave $\pi$ and $\mu_0$ (see Lemma \ref{lem:FRlogconc}).

With some stronger conditions on $\mu_0$ and $\pi$, as detailed in Assumptions \ref{ass:logconcave} and \ref{ass:WLC}, we establish preservation of log-concavity under the W flow for a limited time horizon, using similar arguments as in \citet[Section 7.2]{Pathiraja2021} (in the context of stochastic filtering) and Lemma 16 in \cite{liang_characterizing_2025}. We first state the necessary assumptions:

\begin{assumption}
\label{ass:logconcave} The following conditions hold  
\begin{enumerate}[label=(\alph*)]
\item $\pi(x) \propto e^{-V_\pi(x)}$, with $V_\pi$ continuously differentiable, is $\alpha_\pi$-strongly log-concave and $L_\pi$-smooth, i.e. there exists an $L_\pi\geq \alpha_\pi > 0$ such that $ L_\pi I \succeq \nabla^2 V_\pi(x) \succeq \alpha_\pi I$ for all $x \in \mathbb{R}^d$;
\item $\mu_0(x) \propto e^{-V_0(x)}$, with $V_0$ continuously differentiable, is $\alpha_0$-strongly log-concave and $L_0$-smooth, i.e. there exists an $L_0\geq \alpha_0 > 0$ such that $ L_0 I \succeq \nabla^2 V_0(x) \succeq \alpha_0 I$ for all $x \in \mathbb{R}^d$.
\end{enumerate}
\end{assumption}

The assumption on $\pi$ assumes that $\pi$ is log-concave and implies that $\nabla V_\pi$ is Lipschitz continuous with Lipschitz constant $L_\pi$. While strong, this assumption is classical in the study of the W flow.
The assumption on $\mu_0$ is mild since $\mu_0$ is often user-chosen and guarantees that $\nabla V_0$ is Lipschitz continuous with constant $L_0$. 

\begin{assumption}
\label{ass:WLC} Given Assumption~\ref{ass:logconcave}, the following conditions hold.  
\begin{enumerate}[label=(\alph*)]
\item \label{ass:WLC1} $V_0 - \frac{(1 + \delta)}{2}V_\pi$ is strongly convex with parameter $\alpha_d > 0$ for some specified $0 < \delta < 1$.   
\item Denote by $R:= -\frac{1}{2}\Delta V_\pi + \frac{1}{4}|\nabla V_\pi|^2$. Also define $\mathcal{H}:= R + V_\pi$.  Assume $\mathcal{H}$  is strongly convex with parameter $\alpha_h > 0$. 
\end{enumerate}
\end{assumption}
The first condition requires the initial density to be sufficiently strongly log-concave relative to the target density. As $\mu_0$ is often user chosen, this assumption is not overly restrictive. The second condition is a uniform coercivity condition on the Schr\"odinger potential $\mathcal H$; this potential appears in the path-integral representation of the W flow and controls the curvature contribution accumulated along the path. These assumptions are sufficient conditions for the result and are not claimed to be necessary.

The next lemma gives the precise statement of the time horizon over which the W flow is guaranteed to preserve strong log-concavity. 
Our proof strategy involves starting with a sequential splitting of the WFR flow and using Girsanov theorem to characterise the intermediate density due to the W flow. 
This is based on a similar proof strategy used to show that the filtering density satisfies a Poincar\'e inequality (see Lemma 5.1 in \cite{Pathiraja2021}). The proof is given in Appendix  \ref{sec:proofWlogconc}
\begin{lemma}[W flows preserve log-concavity]
    \label{lem:Wlogconc} Suppose $\mu_t$ is the solution of the W flow \eqref{eq:winfflow} at time $t$, initialised at $\mu_0(x)$ satisfying Assumption \ref{ass:WLC}.  
Let $b:= \sqrt{\frac{|\alpha_h - L_\pi|}{2}}$. Then for some  $0 < \delta < 1$ as in Assumption \ref{ass:WLC}, the following holds 
\begin{enumerate}[label=(\roman*)]
    \item \textbf{Fixed time horizon}:  Suppose $\alpha_h - L_\pi < 0$.  Then there exists a $t^\ast > 0$ such that for all $t < t^\ast$, $ \mu_t(x) \propto e^{-\mathcal{E}_t(x)}$, with $\mathcal{E}_t(x)$ a strongly convex function where $\nabla^2 \mathcal{E}_t(x) \succeq \left( \frac{\alpha_\pi}{2} + c_t \right) I$ for all $x \in \mathbb{R}^d, $ where
    \begin{align}
    \label{eq:ctWflow}
        c_t &=  b \tan \left(\tan^{-1}\left( \frac{c_0}{b} \right) - 2b t  \right) \\
         \label{eq:coeqn}
    c_0 &= \alpha_d + \frac{\delta}{2} \alpha_\pi,
    \end{align}
    \item \textbf{Uniform in time}: Suppose $\alpha_h - L_\pi > 0$.  Then for all $t \geq 0$, $ \mu_t(x) \propto e^{-\mathcal{E}_t(x)}$, with $\mathcal{E}_t(x)$ a strongly convex function where $\nabla^2 \mathcal{E}_t(x) \succeq \left( \frac{\alpha_\pi}{2} + c_t \right) I$ for all $x \in \mathbb{R}^d, $ where 
    \begin{align}
    \label{eq:ctWflowuniform}
        c_t = b \left( \frac{K - e^{-4bt}}{K +  e^{-4bt}} \right) = b \left( \frac{b + c_0 - e^{-4bt}(b - c_0)}{b + c_0 + e^{-4bt}(b - c_0)} \right) 
    \end{align}
    with $K  := \frac{b + c_0}{b - c_0}  $ and $c_0$ as in \eqref{eq:coeqn}. 
\end{enumerate} 

\end{lemma}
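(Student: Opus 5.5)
The plan is to turn the Fokker--Planck equation \eqref{eq:winfflow} into a Schr\"odinger-type equation by a ground-state transform, and then propagate a Hessian lower bound along it with a Trotter splitting. Write $\mu_t = \pi^{1/2}\psi_t$. A direct computation from $\partial_t\mu = \nabla\cdot(\mu\nabla V_\pi) + \Delta\mu$ gives
\begin{align*}
\partial_t \psi_t = \Delta \psi_t - R\,\psi_t, \qquad R = -\tfrac12\Delta V_\pi + \tfrac14|\nabla V_\pi|^2 ,
\end{align*}
up to the normalisation constant. Its Feynman--Kac/Girsanov representation is
\begin{align*}
\psi_t(x)=\mathbb{E}\Big[\psi_0(x+\sqrt2 B_t)\exp\Big(-\int_0^t R(x+\sqrt2 B_s)\,ds\Big)\Big],
\end{align*}
as in \citet{Pathiraja2021}. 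Writing $\psi_t = e^{-\phi_t}$ gives $\mathcal{E}_t = \tfrac12 V_\pi + \phi_t$. Since $\nabla^2 V_\pi\succeq\alpha_\pi I$, it suffices to show $\nabla^2\phi_t\succeq c_t I$.

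\textbf{Initial condition and potential.} We have $\phi_0 = V_0-\tfrac12 V_\pi = \big(V_0-\tfrac{1+\delta}{2}V_\pi\big)+\tfrac{\delta}{2}V_\pi$. By Assumption~\ref{ass:WLC}(a) and Assumption~\ref{ass:logconcave}(a), $\nabla^2\phi_0\succeq(\alpha_d+\tfrac\delta2\alpha_\pi)I=c_0 I$, which is exactly \eqref{eq:coeqn}. For the potential, $R=\mathcal H-V_\pi$ with $\nabla^2\mathcal H\succeq\alpha_h I$ and $\nabla^2V_\pi\preceq L_\pi I$. Hence $\nabla^2R\succeq\kappa I$ with $\kappa:=\alpha_h-L_\pi$, so $|\kappa| = 2b^2$.

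\textbf{Splitting step.} Approximate the semigroup $e^{t(\Delta-R)}$ by the Trotter product $(e^{h\Delta}e^{-hR})^{n}$ with $h=t/n$. Multiplication by $e^{-hR}$ maps a $c$-convex $\phi$ to a $(c+\kappa h)$-convex one. For the heat step, if $\phi$ is $c$-strongly convex with $c>0$, then $e^{h\Delta}e^{-\phi}$ is $e^{-\tilde\phi}$ with $\tilde\phi$ being $c/(1+2ch)$-strongly convex. This follows from Pr\'ekopa--Leindler, or more sharply from Brascamp--Lieb applied to the Hessian of $-\log$ of a Gaussian convolution; the bound is exact for $\phi = \tfrac c2|x|^2$. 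Per step the recursion is $c\mapsto c/(1+2ch)+\kappa h = c+h(\kappa-2c^2)+O(h^2)$. Letting $h\to0$, and using that the lower bound passes to the limit because strong convexity is closed under locally uniform convergence, gives $\nabla^2\phi_t\succeq c_tI$, where $c_t$ solves the Riccati equation
\begin{align*}
\dot c_t=\kappa-2c_t^2 .
\end{align*}
For $\kappa=2b^2>0$ this equation is solved by $c_t=b\tanh(2bt+\tanh^{-1}(c_0/b))$, which is \eqref{eq:ctWflowuniform}. The solution stays positive for all $t$ and tends to $b$, giving case (ii). For $\kappa=-2b^2<0$ it is solved by $c_t=b\tan(\tan^{-1}(c_0/b)-2bt)$, which is \eqref{eq:ctWflow}. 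This solution decreases and is positive only up to $t^\ast=\tan^{-1}(c_0/b)/(2b)$, which is the finite horizon in case (i).

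\textbf{Main obstacle.} The hard step is justifying the heat-step contraction $c\mapsto c/(1+2ch)$ for non-Gaussian $\phi$, and making the Trotter limit rigorous with uniform-in-$n$ Hessian bounds. I would first try the Brascamp--Lieb variance bound: the Hessian of $-\log(e^{-\phi}*g_{2h})$ equals $\tfrac{1}{2h}I-\tfrac{1}{4h^2}\mathrm{Cov}$ of a measure with potential $\phi+|\cdot|^2/(4h)$, which is at least $(c+\tfrac1{2h})$-convex. This gives the bound directly. The heat-step bound needs $c>0$ at every step, and this positivity requirement is exactly what restricts case (i) to $t<t^\ast$. As an alternative to the splitting, I would keep a maximum-principle argument on the equation for $\nabla^2\phi_t$ derived from $\partial_t\phi=\Delta\phi-|\nabla\phi|^2+R$, where the term $-2(\nabla^2\phi)^2$ produces the same Riccati comparison. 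Growth conditions at infinity, which follow from Assumption~\ref{ass:logconcave}, would be needed to apply the maximum principle on $\mathbb{R}^d$.
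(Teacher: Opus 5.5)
Your proposal is correct and follows the paper's argument. The paper's Girsanov path integral for the time-rescaled Langevin process is your Feynman--Kac representation of $\psi_t=\mu_t\pi^{-1/2}$, and its discretised path integral is your Trotter splitting: multiplication by $e^{-\tau R/2}$ followed by Gaussian convolution, with the convolution controlled by the Pr\'ekopa--Leindler/Brascamp--Lieb Hessian bound. Its Riccati equation $\dot c_s=-c_s^2\pm b^2$ in the rescaled time $s=2t$ is exactly your $\dot c_t=\kappa-2c_t^2$.

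The one slip is cosmetic. Your $\tanh$ expression in case (ii) needs $c_0<b$: for $c_0>b$ it becomes a $\coth$, and for $c_0=b$ it is constant. The second form in \eqref{eq:ctWflowuniform} covers all three cases.
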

When $\alpha_h - L_\pi < 0$, a uniform in time preservation of log-concavity is not guaranteed, and the length of the time horizon over which strong convexity is preserved depends on the log-concavity of the difference between initial potential $V_0$ and (some factor) of the target potential $V_\pi$. Loosely speaking, the time horizon increases as $\alpha_d$ in Assumption~\ref{ass:WLC} increases.

\begin{remark}[Gaussian distributions]
Condition (ii) in Lemma~\ref{lem:Wlogconc} is not necessary in the Gaussian case (but is sufficient beyond the Gaussian case for uniform in time preservation). In fact, in the Gaussian case $$R(x) = -\frac{1}{2}\Delta V_\pi(x) + \frac{1}{4}\left|\nabla V_\pi(x)\right|^2 = -\frac{1}{2}Tr[C_\pi^{-1}] + \frac{1}{4}(x-m_\pi)^\top C_\pi^{-2}(x-m_\pi)$$
    and $\nabla^2 R(x) = \frac{1}{2}C_\pi^{-2}\succeq 0$. It follows that when checking the strong convexity of $\nabla^2_{w_1}f_2$ in the proof of Lemma~\ref{lem:Wlogconc}, no condition on $\alpha_h, L_\pi$ is needed.  However, this is unique to the Gaussian case and does not apply in general. 
\end{remark}

Although the W flow cannot be generally expected to preserve log-concavity uniformly in time (unless $\pi$ is Gaussian), the properties of the FR flow can be exploited to maintain log-concavity uniformly in time. The next lemma shows that the FR preserves log-concavity when $\mu_0, \pi$ satisfy Assumption~\ref{ass:logconcave}.
\begin{lemma}[FR flows preserve log-concavity]
    \label{lem:FRlogconc}
 Suppose $\mu_t$ is the solution of the FR flow \eqref{eq:infFR} at time $t$, initialised at $\mu_0(x)$ satisfying Assumption \ref{ass:logconcave}. Then $\mu_t, \enskip t > 0$ is  $\alpha_t$-strongly log-concave with
    \begin{align*}
        \alpha_t = (1-e^{-t})\alpha_\pi + e^{-t}\alpha_0
    \end{align*}
    \end{lemma}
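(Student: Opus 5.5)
The plan is to use the closed-form solution \eqref{eq:fr_semigroup} of the FR flow and read off the potential directly. By \eqref{eq:fr_semigroup},
\begin{align*}
\mu_t(x) \propto \pi(x)^{1-e^{-t}}\mu_0(x)^{e^{-t}} \propto \exp\left(-\left[(1-e^{-t})V_\pi(x) + e^{-t}V_0(x)\right]\right),
\end{align*}
so $\mu_t \propto e^{-V_t}$ with $V_t := (1-e^{-t})V_\pi + e^{-t}V_0$. The normalising constants of $\pi$ and $\mu_0$, together with the time-dependent normalisation of $\mu_t$, are constants in $x$. They therefore do not affect the Hessian of the potential, and I would absorb them into the proportionality sign.

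Next I would differentiate twice. Under Assumption~\ref{ass:logconcave} the Hessians $\nabla^2 V_\pi$ and $\nabla^2 V_0$ exist and satisfy $\nabla^2 V_\pi \succeq \alpha_\pi I$ and $\nabla^2 V_0 \succeq \alpha_0 I$. The weights $1-e^{-t}$ and $e^{-t}$ lie in $[0,1]$ for $t>0$, and a nonnegative combination preserves the Loewner order. Hence
\begin{align*}
\nabla^2 V_t(x) = (1-e^{-t})\nabla^2 V_\pi(x) + e^{-t}\nabla^2 V_0(x) \succeq \left[(1-e^{-t})\alpha_\pi + e^{-t}\alpha_0\right] I = \alpha_t I
\end{align*}
for all $x$, which is exactly the claimed $\alpha_t$-strong log-concavity. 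As a by-product, the upper bounds give $L_t := (1-e^{-t})L_\pi + e^{-t}L_0$ smoothness.

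The only point needing care is that $\mu_t$ is well defined, meaning the right-hand side of \eqref{eq:fr_semigroup} is normalisable, and that \eqref{eq:fr_semigroup} really solves \eqref{eq:infFR}. For integrability, $V_t$ is $\alpha_t$-strongly convex with $\alpha_t \geq \min(\alpha_\pi,\alpha_0) > 0$, so it grows at least quadratically and $e^{-V_t}$ is integrable; the same growth gives finite second moments. For the second point I would either invoke the cited references or verify it directly. Writing $\log\mu_t = -V_t - \log Z_t$, we have $\partial_t \log\mu_t = e^{-t}(V_\pi - V_0) - \partial_t\log Z_t$. On the other side, $-(\log\mu_t - \log\pi) = (V_t - V_\pi) + \log Z_t + \mathrm{const} = e^{-t}(V_0 - V_\pi) + \mathrm{const}$. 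So the centred quantity $\log\mu_t/\pi - \mathbb{E}_{\mu_t}[\log\mu_t/\pi]$ equals $e^{-t}\bigl(V_0 - V_\pi - \mathbb{E}_{\mu_t}[V_0 - V_\pi]\bigr)$. This is consistent with $\partial_t\log\mu_t$ once the normalisation derivative is identified as the mean. Care is needed here with the sign of the exponent $t$-dependence; if the signs do not match, it should only change how the convention in \eqref{eq:fr_semigroup} is written, not the Hessian computation.

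I do not expect a real obstacle. The mildly delicate step is the justification of \eqref{eq:fr_semigroup}, which is taken from the literature, together with the normalisability check.
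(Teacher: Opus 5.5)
Your proof is correct and is essentially the paper's own argument: you use \eqref{eq:fr_semigroup} to write $\mu_t \propto e^{-V_t}$ with $V_t=(1-e^{-t})V_\pi+e^{-t}V_0$, then bound $\nabla^2 V_t$ below by the convex combination $\alpha_t I$. Your optional consistency check has a sign slip: in fact $\partial_t\log\mu_t = e^{-t}(V_0-V_\pi)-\partial_t\log Z_t$, and $\log\frac{\mu_t}{\pi}$ equals $e^{-t}(V_\pi-V_0)$ up to a constant. Once this is corrected, using $\partial_t\log Z_t = e^{-t}\mathbb{E}_{\mu_t}[V_0-V_\pi]$, the closed form does satisfy \eqref{eq:infFR}, and nothing in the Hessian argument is affected.
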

    \begin{proof}
    Using the exact solution to the FR flow \eqref{eq:fr_semigroup},
     \begin{align*}
            \mu_t(x) \propto \pi(x)^{1 - e^{-t}} \mu_0 (x) ^{e^{-t}} \propto e^{-V_t(x)}          
        \end{align*}
        where $ V_t = (1 - e^{-t})V_\pi + e^{-t}V_0$.  Also 
        \begin{align*}
            \nabla^2 V_t =  (1 - e^{-t}) \nabla^2 V_\pi + e^{-t} \nabla^2 V_0  \succeq ((1-e^{-t})\alpha_\pi + e^{-t}\alpha_0)I =: \alpha_t I
        \end{align*}
        where $\alpha_t > 0$ is a convex combination of positive scalars for all $t > 0$.  
    \end{proof}

In the next theorem, we show that the preservation of log-concavity under the FR flow can be exploited to ensure the same for the WFR flow uniformly in time, even when the W flow may not preserve log-concavity uniformly.  The proof can be found in Appendix \ref{app:logconcuni}.    

\begin{theorem}[WFR preserves strong log-concavity uniformly in time]
\label{theo:logconc}
         Assume the conditions of Lemma \ref{lem:Wlogconc}.  For the case $\alpha_h - L_\pi < 0$ only, additionally assume that $b:= \sqrt{\frac{|\alpha_h - L_\pi|}{2}}$ satisfies  
    \begin{align}
        \label{ass:b2}
        b^2 < \tfrac{\alpha_\pi}{4}. 
    \end{align} 
    Then $\mu_t$, the solution of \eqref{eq:WFRpde} at time $t$ is $\alpha_t$-strongly log-concave for all $t > 0$, 
    where  
\begin{align}
\label{eq:logconcuniform}
    \alpha_t 
    & = \frac{\alpha_\pi}{2} -\frac{1}{4} + m \cdot \frac{(m+c_0+\frac14)-(m-c_0-\frac14)e^{-4mt}}{(m+c_0+\frac14)+(m-c_0-\frac14)e^{-4mt}}
\end{align}
where $c_0$ is given by \eqref{eq:coeqn}, $\enskip m = \sqrt{\frac{1}{2} \left( \frac{1}{8} + r \right)}$ and 
\begin{align*}
    r  &= \begin{cases}
             \frac{\alpha_\pi}{2} - 2b^2, \quad \alpha_h - L_\pi < 0 \\
             \vspace{-0.4cm} \\
             \frac{\alpha_\pi}{2} + 2b^2, \quad \alpha_h - L_\pi > 0
         \end{cases}
\end{align*}
Note also that $ \alpha_\infty  > \frac{\alpha_\pi}{2}$.
\end{theorem}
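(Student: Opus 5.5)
We propose to argue through a sequential (Lie--Trotter) splitting of \eqref{eq:WFRpde}, as indicated before Lemma~\ref{lem:Wlogconc}. We partition $[0,t]$ into steps of size $h=t/n$. On each step we first run the W flow \eqref{eq:winfflow} for time $h$ and then the FR flow \eqref{eq:infFR} for time $h$. We then pass to the limit $h\to 0$, using the standard convergence of the splitting scheme to the WFR solution together with the fact that a uniform lower bound on Hessians of potentials is stable under this limit (strong log-concavity is closed under weak convergence). The task therefore reduces to propagating a lower bound $\nabla^2\mathcal{E}\succeq(\frac{\alpha_\pi}{2}+c)I$ through one W step and one FR step, and obtaining a recursion for $c$.

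\textbf{W step.} The potential is written in the form $\frac{\alpha_\pi}{2}+c$, so that Lemma~\ref{lem:Wlogconc} can be reused from an arbitrary starting curvature rather than only from $c_0$. Its proof (Girsanov plus the Schr\"odinger potential $\mathcal H$) in fact gives the Riccati inequality
\[
\dot c_s \ge -\tfrac12\cdot 4(c_s^2 + \sigma b^2)\,,\qquad \sigma=\operatorname{sign}(\alpha_h-L_\pi)\cdot(-1)\,,
\]
whose explicit solutions are \eqref{eq:ctWflow} for the tangent case and \eqref{eq:ctWflowuniform} for the tanh case. Concretely, $\dot c=-2(c^2+b^2)$ in case (i) and $\dot c=2(b^2-c^2)$ in case (ii); one checks by differentiating the stated formulas that these are the ODEs. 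Over a small step $h$ this yields $c\mapsto c+2h(\mp b^2 - c^2)+o(h)$.

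\textbf{FR step.} If $\mu\propto e^{-\mathcal E}$, then by \eqref{eq:fr_semigroup} the potential after time $h$ is $(1-e^{-h})V_\pi+e^{-h}\mathcal E$. Its Hessian is bounded below by $\frac{\alpha_\pi}{2}+c+h(\alpha_\pi-\frac{\alpha_\pi}{2}-c)+o(h)$, i.e.\ $c\mapsto c+h(\frac{\alpha_\pi}{2}-c)$, exactly as in Lemma~\ref{lem:FRlogconc}. Here we use $\nabla^2V_\pi\succeq\alpha_\pi I$, which is why the target curvature re-enters and keeps pushing $c$ up.

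\textbf{Combined ODE.} Letting $h\to 0$, the effective curvature offset satisfies
\[
\dot c = -2c^2 - c + \tfrac{\alpha_\pi}{2} \mp 2b^2 = -2c^2-c+r\,.
\]
Completing the square with $u=c+\frac14$ gives $\dot u=-2u^2+(r+\frac18)=2(m^2-u^2)$, with $m^2=\frac12(\frac18+r)$. This is again the tanh Riccati equation. Solving with $u_0=c_0+\frac14$ gives $u_t=m\frac{(m+u_0)-(m-u_0)e^{-4mt}}{(m+u_0)+(m-u_0)e^{-4mt}}$, and $\alpha_t=\frac{\alpha_\pi}{2}+u_t-\frac14$ is precisely \eqref{eq:logconcuniform}. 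In case (i), hypothesis \eqref{ass:b2} gives $r>0$, so $m>\frac14$, $m$ is real, and the solution stays bounded and converges to $m$. This yields $\alpha_\infty=\frac{\alpha_\pi}{2}+m-\frac14>\frac{\alpha_\pi}{2}$, and $\alpha_t>0$ for all $t$ since $u_t$ is monotone between $u_0>0$ and $m>0$. In case (ii), $r>0$ automatically.

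\textbf{Main obstacle.} The delicate step is justifying the W step from a general intermediate density rather than from $\mu_0$. Lemma~\ref{lem:Wlogconc} is stated with Assumption~\ref{ass:WLC}(a) encoded through $c_0$, and we need it with initial curvature $\frac{\alpha_\pi}{2}+c$ for the current $c$. We would revisit its Girsanov/Feynman--Kac proof: there the initial data enter only through the lower bound $\nabla^2(\mathcal E-\frac12V_\pi)\succeq c$, since the $h$-transform by $e^{-V_\pi/2}$ produces $\mathcal H$. From this we would extract the one-step Riccati comparison, uniformly over the splitting. The case (i) blow-up time $t^*$ is harmless, because each W substep has length $h\to0$ and the FR steps prevent $c$ from approaching the singularity, which is exactly what \eqref{ass:b2} ensures. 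Finally, we must check that the lower bound passes to the limit of the splitting, using convergence of the scheme for smooth log-concave data.
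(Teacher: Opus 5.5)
Your proposal is correct and follows essentially the same route as the paper: W-then-FR sequential splitting, restarting Lemma~\ref{lem:Wlogconc} from the current curvature bound on $\mathcal{E}-\frac12V_\pi$ (the paper's $\delta_i$ bookkeeping reduces to exactly this), the exact FR update $c\mapsto(1-e^{-h})\frac{\alpha_\pi}{2}+e^{-h}c$, identification of the recursion as a splitting of $\dot c=-2c^2-c+r$, and completing the square with $u=c+\frac14$. The only difference is in case (i): the paper keeps every W substep valid by taking step sizes below the positive fixed point of the threshold map $g(\tau)=\frac{1}{2b}\tan^{-1}\bigl(\frac{(1-e^{-\tau})\alpha_\pi}{2b}\bigr)$, whereas you use uniform steps and the fact that the limiting solution stays above $\min(c_0,m-\frac14)>0$, which is arguably cleaner. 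Note, though, that the bound you actually need there is $u_t>\frac14$ (i.e.\ $c_t>0$), not merely $u_t>0$.
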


In the case $\alpha_h-L_\pi<0$, the W flow does not preserve log-concavity uniformly in time and we require condition~\eqref{ass:b2}. This condition is sufficient to ensure that there exists a lower bound to the possible curvature deficit of the Schr\"odinger potential. 
In the case $\alpha_h-L_\pi>0$, the W flow preserves log-concavity uniformly, and combining this with Lemma~\ref{lem:FRlogconc} via the same splitting argument as in the proof of Theorem \ref{theo:logconc} (without restrictions on $b$ as in \eqref{ass:b2}) yields the result.  

A comparison between Theorem~\ref{theo:logconc} and the exact log-concavity constant from \cite{us_splitting, liero2025evolution} in the Gaussian case shows that the constant in Theorem~\ref{theo:logconc} is tight (Figure~\ref{fig:alpha_t}).
We further evaluate the tightness of the constant \eqref{eq:logconcuniform} on a 1D example.

\begin{example}[1D Non-Gaussian target]
\label{ex:nongauss}
We consider a non-Gaussian target, corresponding to a perturbation of a Gaussian: for $\beta>0, c_\pi>0$ set
\begin{align}
    \label{eq:target_ex}
    V_\pi(x) &= \frac{x^2}{2 c_\pi} + \beta \log(1 + e^x).
\end{align}
We first check that $\pi$ satisfies our assumptions: we have
\begin{align*}
    V_\pi''(x) &= \frac{1}{c_\pi} + \beta(1+e^{-x})^{-2} e^{-x} = \frac{1}{c_\pi} + \beta u(x) (1 - u(x))
\end{align*}
where $u(x) = (1 + e^{-x})^{-1}$ and $0 < u(x) < 1$ for all $ x \in \mathbb{R}$.
Clearly
\begin{align*}
    V_\pi''(x) &\geq  \frac{1}{c_\pi} + 0 = \alpha_\pi 
\end{align*}
for all $x\in\real$.
Moreover $V_\pi''(x)$ has a unique maximum at $x=0$ and thus
\begin{align*}
    V_\pi''(x) \leq \frac{1}{c_\pi} + \frac{\beta}{4} =: L_\pi.
\end{align*}
This shows that Assumption~\ref{ass:logconcave} is satisfied.


To check Assumption~\ref{ass:WLC} consider $V_0 = \frac{x^2}{2\sigma^2_0}$. Then,
\begin{align*}
      V_0'' - \frac{(1+\delta)}{2} V_\pi'' &= \frac{1}{\sigma^2_0} - \frac{(1+\delta)}{2} \left( \frac{1}{c_\pi} + \beta u(x) (1 - u(x)) \right) \\
      & \geq \frac{1}{\sigma^2_0} - \frac{(1+\delta)}{2}  \frac{1}{c_\pi} - \frac{(1+\delta)}{2} \frac{\beta}{4} =: \alpha_d 
\end{align*}
To guarantee $\alpha_d>0$ is sufficient to take an initial distribution such that 
$\frac{1}{c_\pi}
+\frac{\beta}{4}
<
\frac{2}{(1+\delta)\sigma^2_0}$.
Since $\delta\in (0,1)$, the factor $(1+\delta)/2$ is increasing in $\delta$ and is bounded above by $1$. Thus $\alpha_d>0$
provided that
\begin{align}
\label{eq:c0_condition}
\frac{1}{c_\pi}
+\frac{\beta}{4}
<
\frac{1}{\sigma^2_0}.
\end{align}
For the second condition, take
\begin{align*}
 \mathcal{H}(x) &= -\frac{1}{2}V_\pi''(x) + \frac{1}{4}\left( V_\pi'(x)\right)^2 + V_\pi(x),
\end{align*}
whose second derivative is
\begin{align*}
\mathcal{H}''(x) 
=&-\frac{\beta}{2}u(x)(1-u(x))(1-6u(x)+6u(x)^2)\\
&+\frac{1}{2}\left(\frac{1}{c_\pi}+\beta u(x)(1-u(x))\right)^2\\
&+\frac{\beta}{2}\left(\frac{x}{c_\pi}+\beta u(x)\right)u(x)(1-u(x))(1-2u(x))\\
&+\frac{1}{c_\pi}+\beta u(x)(1-u(x)).
\end{align*}
As $0<u(x)< 1$, for $\beta>0$ we have that the second term in the expression above is lower bounded by $1/(2c_\pi^2)$ and the last term is lower bounded by $1/c_\pi$. Moreover, we have $0<u(x)(1-u(x))<1/4$ and $|1-6u(x)+6u(x)^2|\leq 1$ and thus the first term is lower bounded by $-\beta/8$.
The remaining term can be written as $T(x)=T_1(x)+T_2(x)$ where
\begin{align*}
    T_1(x)&= \frac{\beta}{2c_\pi}
x u(x)(1-u(x))(1-2u(x))\\
T_2(x) &=\beta^2
u^2(x)(1-u(x))(1-2u(x)).
\end{align*}
For the second term, since $|1-2u(x)|\le1$
and $u^2(x)(1-u(x))\le u(x)(1-u(x))\le\frac14$,
we have the lower bound $-\frac{\beta^2}{4}$. 
For the first term, using known trigonometric identities we have
\[
T_1(x)
=
-\frac{\beta}{2c_\pi}\frac{x\tanh(x/2)}{4\cosh^2(x/2)}.
\]
Since \(x\) and \(\tanh(x/2)\) have the same sign, we can conclude that $T_1(x)\leq 0$. 
Moreover, using the fact that $|\tanh(x/2)|\leq 1$ and $\cosh(x/2)\ge \frac12 e^{|x|/2}$ we find
\begin{align*}
    |T_1(x)|\leq  \frac{\beta}{2c_\pi}|x|e^{-|x|}\leq \frac{\beta}{2ec_\pi}
\end{align*}
Therefore, we get the lower bound
$T_1(x)
\ge
-\frac{\beta}{2ec_\pi}$.

Combining all lower bounds gives
\begin{align}
    \label{eq:h_condition}
    \mathcal{H}''(x) \geq -\frac{\beta}{8}+\frac{1}{2c_\pi^2}-\frac{\beta}{2ec_\pi}
-\frac{\beta^2}{4}+\frac{1}{c_\pi}:=\alpha_h.
\end{align}

In Figure \ref{fig:alpha_t}(right) we consider the case $\sigma^2_0 = 1, c_\pi = 2, \beta = 1/2$ for which~\eqref{eq:c0_condition} trivially holds and~\eqref{eq:h_condition} is positive. 
The condition \eqref{ass:b2}  corresponds to 
\begin{align*}
   \frac{1}{2c_\pi}+\frac{\beta}{4} <\alpha_h  < \frac{1}{c_\pi}+\frac{\beta}{4}
\end{align*}
which is also satisfied. We compare the results of Theorem~\ref{theo:logconc} with the log-concavity constant obtained by numerically solving the WFR PDE with target $\pi$.
In this case \eqref{eq:logconcuniform} is reasonably tight, although its quality degrades for large $t$ and is clearly dependent on the estimate for $\alpha_h$. Although $\alpha_\infty$ can be different from $\alpha_\pi$, the true asymptotic constant, it is never worse than $\frac{\alpha_\pi}{2}$.  
\end{example}

We will rely on the results of Theorem~\ref{theo:logconc} in the next section to characterise convergence rates. 

\begin{figure}
    \centering
    \includegraphics[width=0.32\textwidth]{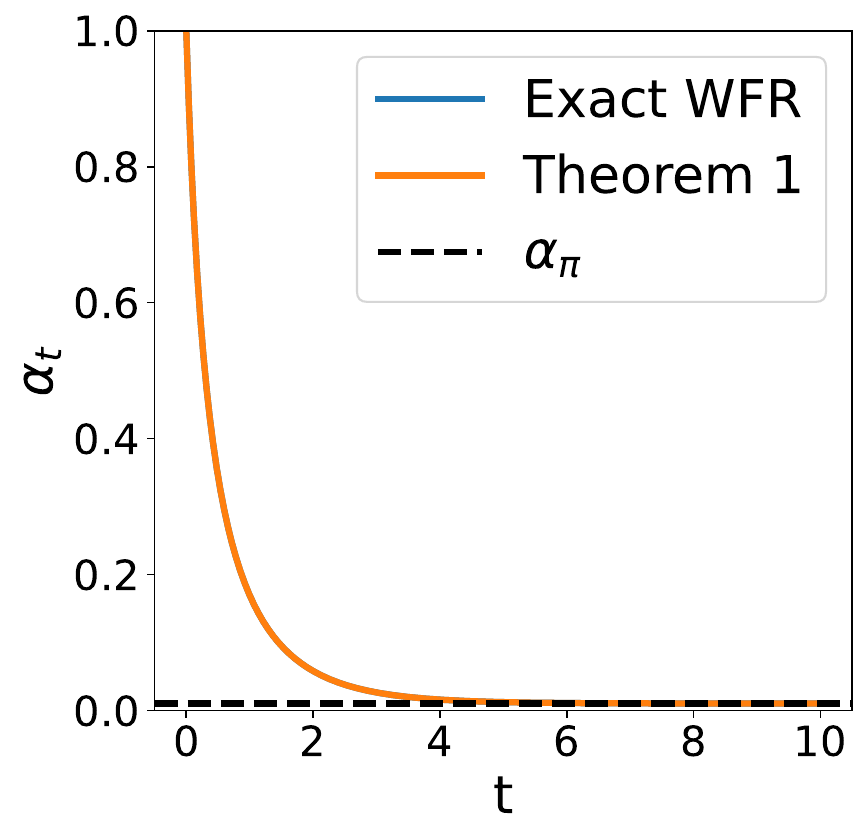}
    \hfill
    \includegraphics[width=0.32\textwidth]{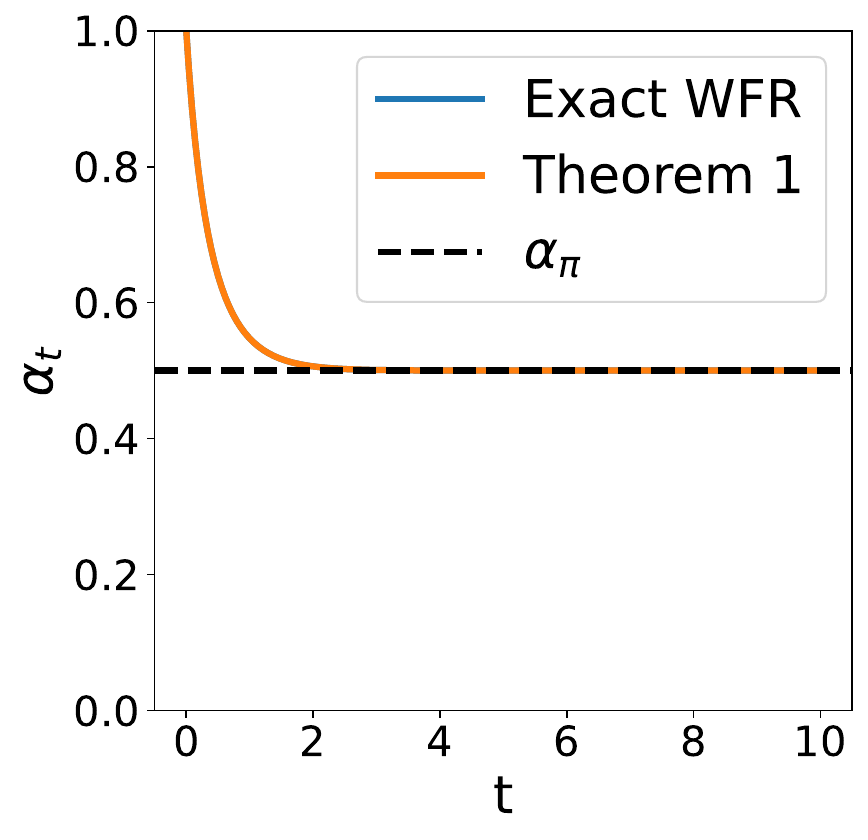}
    \hfill
    \includegraphics[width=0.32\textwidth]{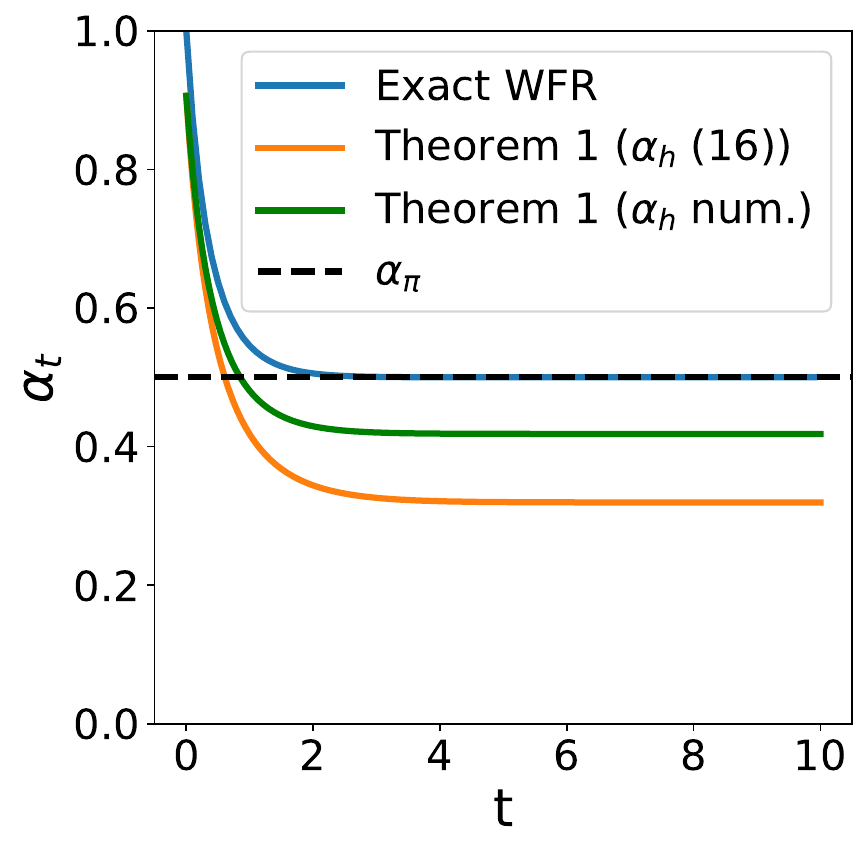}
    \caption{Comparison of log-concavity constant from Theorem~\ref{theo:logconc} and true log-concavity constant for the solution of \eqref{eq:WFRpde} for 1D targets.  Target densities considered are $\pi(x) = \mathcal{N}(x; 0, 100)$ (left), $\pi(x) = \mathcal{N}(x; 0, 2)$ (middle) and $\pi(x) \propto \exp \left(  -\frac{x^2}{4} - 0.5\log(1 + e^x)\right)$ (right).  For the non-Gaussian target, we consider $\alpha_t$ from Theorem 1 using $\alpha_h$ developed via the analytic calculations in \eqref{eq:h_condition} (orange line) as well as $\alpha_h$ calculated numerically (green line).   For the Gaussian targets, $\alpha_t$ is known analytically  whereas for the non-Gaussian target, $\alpha_t$ is evaluated numerically from an Euler discretisation of \eqref{eq:WFRpde}. In all cases, $\mu_0(x) = \mathcal{N}(x; 0, 1)$. In the Gaussian case, the log-concavity constants estimated by \eqref{eq:logconcuniform} are tight. For the third target, \eqref{eq:logconcuniform} is reasonably tight, although its quality degrades for large $t$ and is dependent on the quality of $\alpha_h$ (compare green vs orange line). }
    \label{fig:alpha_t}
\end{figure}

\section{Convergence of Wasserstein--Fisher--Rao gradient flow}
\label{sec:convlogconc}

The preservation of log-concavity under the WFR flow established in the previous section can be exploited to obtain explicit rates of convergence to $\pi$ for the WFR flow without a warm start condition as~\eqref{eq:sharp_wfr} or bounded moment conditions~\eqref{eq:rate_wfr}. In Proposition \ref{prop:decayJexactWFR}, we obtain an additive W and FR dissipation rate for the symmetrised KL rather than for the KL alone.  
The symmetrised $\KL$ divergence, or Jeffreys' divergence $J$, is defined as
\begin{align*}
    J(\mu, \pi):= \KL(\mu||\pi) + \KL(\pi||\mu),
\end{align*}
for any $\mu \ll \pi$ and also $\pi \ll \mu$.  The rationale behind considering the symmetrised KL is due to the fact that the KL is not geodesically convex under the FR flow \citep[Theorem 1.1]{carrillo_fisher-rao_2024} nor satisfies a gradient dominance condition \citep[Theorem 4.1]{carrillo_fisher-rao_2024}, which makes its convergence analysis more difficult.
On the other hand, Jeffreys' divergence satisfies a gradient dominance condition \citep[Section 4]{carrillo_fisher-rao_2024} which allows to achieve improved rates. 
In addition, $J(\mu, \pi)$ upper bounds $\KL(\mu||\pi)$ and thus the obtained rates for $J(\mu, \pi)$ gives information on the decay of KL too. The decay rate obtained in Proposition \ref{prop:decayJexactWFR} is a sum of the decay rates of W and FR flows\footnote{where the constant for the W flow is instead written in terms of the LSI constant, since W flows preserve LSI uniformly in time \citet[Lemma 16]{liang_characterizing_2025}.} as was conjectured by \cite{domingo-enrich2023an}.


\begin{proposition}
\label{prop:decayJexactWFR} 
    Assume the conditions of Theorem~\ref{theo:logconc}.
    Then the following decay result holds for $\mu_t$ the solution of the WFR PDE \eqref{eq:WFRpde} for all $t > 0$
    \begin{align*}
        J(\mu_t, \pi) \leq J(\mu_0, \pi)e^{-t(\alpha_\pi +1 )}. 
    \end{align*}
\end{proposition}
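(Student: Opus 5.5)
The plan is to differentiate $J(\mu_t,\pi)$ along \eqref{eq:WFRpde}, split the dissipation into its Wasserstein and Fisher--Rao parts, bound each part from below by a multiple of $J$, and then apply Gr\"onwall. Write $\psi_t=\log(\mu_t/\pi)$, so that $J(\mu_t,\pi)=\int(\mu_t-\pi)\psi_t$. Since $\int\partial_t\mu_t=0$,
\begin{align*}
\frac{d}{dt}J(\mu_t,\pi)=\int \psi_t\,\partial_t\mu_t-\int \frac{\pi}{\mu_t}\,\partial_t\mu_t .
\end{align*}
I would then substitute $\partial_t\mu_t=f_\W(\mu_t)+f_\F(\mu_t)$ and treat the two operators separately.

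\textbf{Wasserstein part.} Integrating by parts gives $\int\psi_t\nabla\cdot(\mu_t\nabla\psi_t)=-\mathcal I(\mu_t||\pi)$. For the second term I would use $\nabla(\pi/\mu_t)=-(\pi/\mu_t)\nabla\psi_t$, which yields
\begin{align*}
-\int\frac{\pi}{\mu_t}\nabla\cdot(\mu_t\nabla\psi_t)=\int\mu_t\nabla\Big(\frac{\pi}{\mu_t}\Big)\cdot\nabla\psi_t=-\int\pi|\nabla\psi_t|^2=-\mathcal I(\pi||\mu_t).
\end{align*}
I then need log-Sobolev inequalities for both measures. Bakry--\'Emery applied to $\pi$ gives $\mathcal I(\mu_t||\pi)\ge 2\alpha_\pi\KL(\mu_t||\pi)$. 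Theorem~\ref{theo:logconc} says $\mu_t$ is $\alpha_t$-strongly log-concave, so Bakry--\'Emery also gives $\mathcal I(\pi||\mu_t)\ge 2\alpha_t\KL(\pi||\mu_t)$. If $\alpha_t\ge\alpha_\pi/2$ for all $t$, the Wasserstein contribution is therefore at most $-\alpha_\pi J(\mu_t,\pi)$.

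\textbf{Lower bound on $\alpha_t$.} I would check $\alpha_t\ge\alpha_\pi/2$ directly from \eqref{eq:logconcuniform}; it suffices that $m\,g(t)\ge\tfrac14$. Here $g$ is the Riccati-type ratio, and $m\,g(t)$ moves monotonically between its value $c_0+\tfrac14$ at $t=0$ and $m$ at $t=\infty$. Since $c_0>0$, the initial value exceeds $\tfrac14$. In addition, $m>\tfrac14$ is equivalent to $r>0$. This holds automatically when $\alpha_h>L_\pi$, and follows from \eqref{ass:b2} when $\alpha_h<L_\pi$.

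\textbf{Fisher--Rao part.} The first term equals $-\mathrm{Var}_{\mu_t}[\psi_t]\le 0$, and I would simply drop it. For the second term,
\begin{align*}
-\int\frac{\pi}{\mu_t}f_\F(\mu_t)=\int\pi\big(\psi_t-\mathbb E_{\mu_t}\psi_t\big)=-\KL(\pi||\mu_t)-\KL(\mu_t||\pi)=-J(\mu_t,\pi).
\end{align*}
This is exactly the gradient-dominance mechanism for Jeffreys' divergence. Combining the two parts gives $\frac{d}{dt}J\le-(\alpha_\pi+1)J$, and Gr\"onwall yields the claimed bound.

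\textbf{Main obstacle.} The difficult step is justifying the integrations by parts and the differentiation under the integral for the genuinely coupled WFR solution. Concretely, I need vanishing boundary terms and finiteness of $\int\pi|\nabla\psi_t|^2$ and $\int\pi/\mu_t\,|\partial_t\mu_t|$. I would handle this using the two-sided Gaussian-type control that comes from the uniform strong log-concavity of $\mu_t$ (Theorem~\ref{theo:logconc}) together with the $L_\pi$-smoothness of $V_\pi$. If needed, I would first run the argument on a truncated or regularised problem and then pass to the limit.
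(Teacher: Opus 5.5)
Your proposal is correct and follows essentially the same route as the paper. You differentiate $J$ along the flow and split it into a Wasserstein part, bounded via $-\mathcal{I}(\mu_t||\pi)-\mathcal{I}(\pi||\mu_t)$ and the two log-Sobolev inequalities using $\alpha_t>\alpha_\pi/2$ from Theorem~\ref{theo:logconc}, and a Fisher--Rao part equal to $-\mathrm{Var}_{\mu_t}[\psi_t]-J(\mu_t,\pi)$, then conclude with Gr\"onwall. The only differences are minor: you compute the Fisher--Rao identity by hand rather than citing Carrillo et al., and you check $\alpha_t\ge\alpha_\pi/2$ explicitly from the monotonicity of the Riccati solution \eqref{eq:logconcuniform}, which the paper only asserts.
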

 \begin{proof}
    Given the assumptions, we may interchange differentiation and integration to obtain 
    \begin{align}
    \nonumber 
    \frac{d}{dt} J(\mu_t, \pi) &=\int \left( \log \frac{\mu_t}{\pi} -\frac{\pi}{\mu_t} \right)(x) \partial_t  \mu_t(x) dx \\
    \label{eq:Jderiv}
    & = \int \left( \log \frac{\mu_t}{\pi} -\frac{\pi}{\mu_t} \right)(x) \left(f_\W(\mu_t(x)) + f_\F(\mu_t(x))\right)  dx.
\end{align}
Begin with the first term, by a standard integration by parts argument, 
\begin{align}
    \int \left( \log \frac{\mu_t}{\pi} -\frac{\pi}{\mu_t} \right)(x) f_\W(\mu_t(x)) dx & = \int \left( \log \frac{\mu_t(x)}{\pi(x)} -\frac{\pi(x)}{\mu_t(x)} \right) \nabla \cdot \left( \mu_t(x) \nabla \log \frac{\mu_t}{\pi}(x) \right) (x) dx  \notag\\   
    \notag
    & = -\mathbb{E}_{\mu_t} \left[\left| \nabla  \log \frac{\mu_t}{\pi}  \right|^2 \right]  - \mathbb{E}_{\pi} \left[  \nabla \log \frac{\mu_t}{\pi} \cdot \nabla \log \frac{\mu_t}{\pi} \right] \\
    \notag
    & = -(\mathcal{I}(\mu_t|| \pi) + \mathcal{I}(\pi||\mu_t)) \\
    \label{eq:LSIineq}
    & \leq -2\min(\lambda_{\pi}^{-1}, \lambda_{\mu_t}^{-1})(\KL(\mu_t|| \pi) + \KL(\pi||\mu_t)) \\
    \notag
    & =  -2\min(\alpha_{\pi}, \alpha_{t})J(\mu_t, \pi), \\
    \label{eq:alphapiineq}
    & < -\alpha_\pi J(\mu_t,\pi),
\end{align}
where $\mathcal{I}(\mu_t|| \pi)$ is the relative Fisher information between $\mu_t$ and $\pi$. The inequality \eqref{eq:LSIineq} holds due to Assumption \ref{ass:logconcave} and Theorem \ref{theo:logconc}.  More specifically, since $\pi$ is $\alpha_\pi$-strongly log-concave, it immediately satisfies a Log-Sobolev inequality \eqref{eq:lsi} with constant $\lambda_{\pi} = \alpha_\pi^{-1}$ (likewise for $\mu_t$, which is $\alpha_t$-strongly log-concave due to Theorem \ref{theo:logconc}).   
The final inequality \eqref{eq:alphapiineq} holds because Theorem~\ref{theo:logconc} gives $\alpha_t>\frac{\alpha_\pi}{2}$ for every finite $t\geq0$. 

Then for the second term in \eqref{eq:Jderiv},  by a direct application of (5.16) in \citet[Theorem 5.6]{carrillo_fisher-rao_2024} with $f(y) = y \log(y)$ and $\bar{f} = yf(y^{-1})$, it holds that 
\begin{align*}
     -\int \frac{\pi(x)}{\mu_t(x)} f_\F(\mu_t(x)) dx = -J(\mu_t, \pi), 
\end{align*}
 and also, 
\begin{align*}
    \int \log \frac{\mu_t(x)}{\pi(x)}  f_\F(\mu_t(x)) dx = -\text{Var}_{\mu_t} \left( \log \frac{\mu_t}{\pi} \right)  < 0,
\end{align*}
so that together, we have
\begin{align}
\label{eq:proof_fr}
    \int \left( \log \frac{\mu_t}{\pi} -\frac{\pi}{\mu_t} \right)(x) f_\F(\mu_t(x)) dx < - J(\mu_t, \pi). 
\end{align}
An application of Gr\"{o}nwall's lemma then yields the result. 
    \end{proof}

\begin{figure}
	\centering
	\includegraphics[width = 0.45\textwidth]{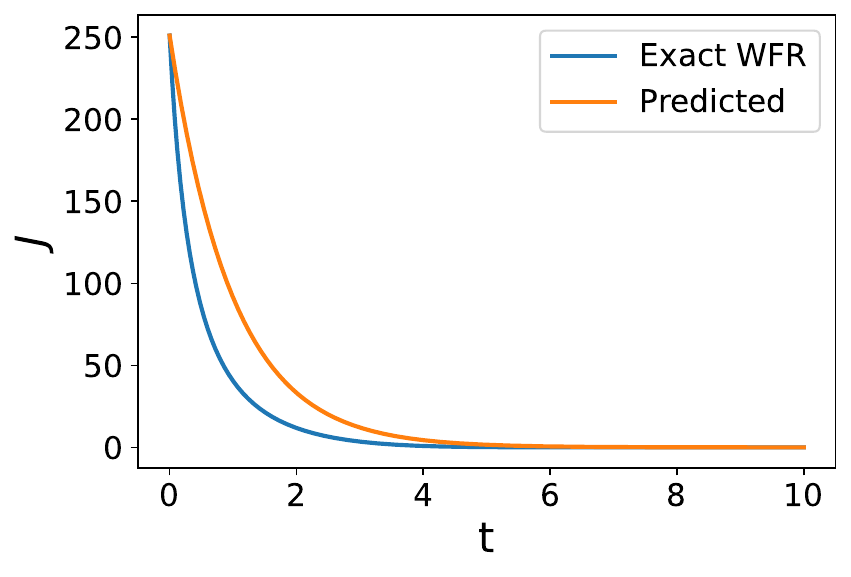}
    \includegraphics[width = 0.45\textwidth]{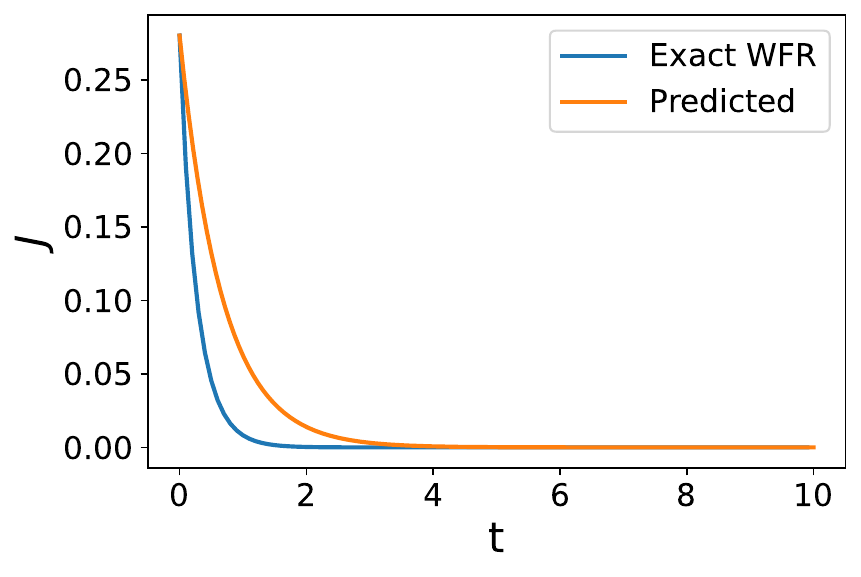}
\caption{Comparison between the exact symmetrised KL decay from \cite{us_splitting} (blue) with our rate in Proposition~\ref{prop:decayJexactWFR} (orange) for (left) a 1D Gaussian with $m_\pi = 20, C_\pi  = 100, m_0 = 0, C_0 = 1$ and (right) the target in Example~\ref{ex:nongauss}.}

\label{fig:rate_j}
\end{figure}

Proposition~\ref{prop:decayJexactWFR} shows that the rate of convergence of the WFR flow has the additive W and FR form conjectured by \cite{domingo-enrich2023an}, up to the factor introduced by the strong log-concavity estimate. This result is obtained for the strongly log-concave case without requiring a warm-start condition. The analysis uses the symmetrised KL instead of the more classical reverse KL.
In Figure~\ref{fig:rate_j} we compare the rate in Proposition~\ref{prop:decayJexactWFR} with the exact decay rate obtained in \cite{us_splitting} for a 1D Gaussian target, showing that the decay predicted by our result is sharp.

We point out that as $\KL(\mu_t||\pi)\leq J(\mu_t, \pi)$, Proposition~\ref{prop:decayJexactWFR} implies
\begin{align}
\label{eq:kl_decay_symmetrised}
    \KL(\mu_t||\pi)\leq J(\mu_0, \pi)e^{-t(\alpha_\pi +1 )}
\end{align}
showing that the same exponential rate is inherited by the KL. However, the constant $J(\mu_0, \pi)$ can be significantly larger than the constants in~\eqref{eq:rate_wfr} making the bound~\eqref{eq:kl_decay_symmetrised} looser for small $t$.

\begin{remark}
    A sharper convergence rate can be obtained by applying Gr\"{o}nwall's lemma before~\eqref{eq:alphapiineq}:
    \begin{align*}
        J(\mu_t, \pi) \leq J(\mu_0, \pi)\exp\left(-\left(2\int_0^t\min(\alpha_{\pi}, \alpha_{u})du + t \right) \right). 
    \end{align*}
    This bound clearly shows that the decay rate for the WFR flow is the sum of that for the W flow and that for the FR flow.
    However, as $\alpha_u$ is generally not known this bound provides less information than the one in Proposition~\ref{prop:decayJexactWFR}.
\end{remark}

\section{Discussion}

In this work, we established new theoretical guarantees for Wasserstein--Fisher--Rao (WFR) gradient flows in the setting of strongly log-concave target distributions. Under the explicit curvature assumptions in Section~\ref{sec:logconcave}, our main result proves that WFR dynamics preserve strong log-concavity uniformly in time. While Wasserstein flows preserve log-concavity only under restrictive conditions, most notably in the Gaussian setting, the addition of the Fisher--Rao component provides a regularising mechanism that maintains strong log-concavity uniformly in time. This structural result enables a refined analysis of the dynamics and forms the foundation for our convergence theory.

Building on this preservation property, we derived explicit non-asymptotic convergence guarantees for the continuous-time WFR flow without requiring warm-start assumptions. In particular, we proved exponential convergence of the symmetrised Kullback--Leibler divergence with an additive Wasserstein--Fisher--Rao dissipation rate, thereby confirming the conjecture of \cite{domingo-enrich2023an}. The same estimate yields a bound on the KL through $\KL(\mu_t\|\pi)\leq J(\mu_t,\pi)$. The analysis also highlights how the transport and birth–death components contribute additively to the overall dissipation, providing a precise mathematical characterisation of the complementary roles of exploration and selection in WFR dynamics.

Several aspects of our analysis could likely be strengthened. The assumptions ensuring uniform preservation of log-concavity are sufficient but may not be necessary, and it would be of considerable interest to determine whether they can be relaxed or replaced by weaker geometric conditions. Likewise, while our convergence result focuses on strongly log-concave targets, extending these techniques to broader classes of distributions—including weakly log-concave, non-convex, or multimodal targets—would significantly increase the applicability of the theory. Since WFR dynamics have been proposed precisely to improve sampling performance in challenging landscapes, understanding the extent to which these theoretical guarantees persist beyond the log-concave regime remains an important open question.

More generally, our results contribute to the growing understanding of hybrid optimal transport geometries by illustrating how transport and reaction mechanisms interact to produce stronger qualitative and quantitative properties than either component alone. Beyond the specific setting considered here, it would be interesting to investigate whether similar structural preservation results hold for other gradient flows defined on combined geometries or for alternative energy functionals arising in sampling, inference, and optimisation. We hope that the techniques developed in this work provide a foundation for further study of Wasserstein--Fisher--Rao dynamics and their role in the analysis of gradient flows on spaces of probability measures.

Finally, beyond the specific convergence result obtained here, the proof strategy itself may be of independent interest. By exploiting the complementary properties of the Wasserstein and Fisher--Rao operators through a splitting argument, we are able to establish qualitative properties of the combined flow that are difficult to obtain directly from the reaction-diffusion equation. This perspective may prove useful in the analysis of other hybrid gradient flows and provides a natural link with operator-splitting formulations of WFR dynamics such as those considered in \cite{us_splitting}.

\paragraph*{Acknowledgments}

F.R.C. gratefully acknowledges the ``de Castro" Statistics Initiative at the \textit{Collegio Carlo Alberto} and the \textit{Fondazione Franca e Diego de Castro}. F.R.C. is supported by the Gruppo
Nazionale per l'Analisi Matematica, la Probabilità e le loro Applicazioni (GNAMPA-INdAM).
S.P. gratefully acknowledges funding from UNSW Faculty of Science Research Grant and the Eva Mayr Stihl Foundation.

The authors would like to thank Josh Bon, Sam Power, Florian Maire, Daniel Paulin, Upanshu Sharma for helpful discussions and in particular Sinho Chewi for pointing to the reference \cite{Kolesnikov2001} and Andre Wibisono for directing us to Lemma 16 in \cite{liang_characterizing_2025}.  The authors gratefully acknowledge the mathematical research institute MATRIX in Australia where part of this research was performed.

\appendix

\section{Preservation of log-concavity}
\label{app:logconc}

\subsection{Proof of Lemma \ref{lem:Wlogconc}}
\label{sec:proofWlogconc} 

The first step is to use Girsanov's theorem to characterise the law of the overdamped Langevin diffusion
\begin{align}
\label{eq:overdamped}
    dY_t = -\nabla V_\pi(Y_t)\,dt + \sqrt{2}\,d\widetilde{W}_t,
\end{align}
where $\{\widetilde{W}_t\}_{t\ge0}$ is a standard Brownian motion. We instead consider the time-rescaled process $X_s = Y_{s/2}$ obtained by the time change $s = 2t$.
By Brownian scaling, $\widetilde{W}_{s/2} = 2^{-1/2}W_s$, where $\{W_s\}_{s\ge0}$ is again a standard Brownian motion. Hence, $X_s$ satisfies
\begin{align}
    \label{eq:rescaledlangevin}
    dX_s = \nabla U(X_s)ds  + dW_s.
\end{align}
with $U = -\frac{1}{2} V_\pi$. 
Note that \eqref{eq:rescaledlangevin} has the same invariant density as \eqref{eq:overdamped}. 

Let $\widehat{\mu}_s$ denote the law of $X_s$, $\mu_W(dw)$ denote the Wiener measure on the space of continuous paths on $[0, s]$, $C([0,s]; \mathbb{R}^d)$ with $W_0 \sim \mu_0$
and $\varphi: \mathbb{R}^d \rightarrow \mathbb{R}$ be a bounded measurable test function.  By an application of Girsanov's theorem (see \citet[Exercise 8.15]{oksendal2003stochastic}),
\begin{align*}
    \int \varphi(x) \widehat{\mu}_s(dx) &= \int_{C([0,s]; \mathbb{R}^d)} \varphi(w_s)\exp \left(  \int_0^s \nabla U (W_u)\cdot dW_u - \frac{1}{2} \int_0^s |\nabla U (W_u)|^2 du \right)\mu_W(dw) \\
    & = \int_{(C[0,s]; \mathbb{R}^d)} \varphi(w_s) \exp \left(  (U(W_s) - U(W_0) - \frac{1}{2}\int_0^s (\Delta U + |\nabla U|^2)(W_u)du   \right)\mu_W(dw) 
\end{align*}
Note that Girsanov's theorem applies here since Novikov's condition is easily verified as $U$ satisfies a linear growth condition $|\nabla U| \leq c_1 (1 + |x|)$ for some $c_1 > 0$ under Assumption~\ref{ass:logconcave}.  In the second line, we have used It\^o's formula to obtain 
\begin{align*}
    U(W_s) - U(W_0) = \int_0^s \frac{1}{2}\Delta U (W_u) du + \int_0^s \nabla U (W_u) \cdot dW_u.
\end{align*}

Consider now a discretisation of the time interval $[0,s]$ with step size $\tau$ and $N$ steps, $0 = s_0, s_1, s_2, \dots, s_{N-1}, s_N = s$ with $s_i - s_{i-1} = \tau$ for all $i = 1, 2, 3, \dots, N$.  From now on, we use the notation $\widehat{\mu}_i$ to denote an approximation to $\widehat{\mu}_{s_i}$ (and likewise for other quantities). Recalling that 
\begin{align*}
    R(w) &= -\frac{1}{2}\Delta V_\pi(w) + \frac{1}{4}|\nabla V_\pi(w)|^2 = \Delta U(w)+|\nabla U(w)|^2
\end{align*}
we can approximate
\begin{align*}
    \int_0^s (\Delta U + |\nabla U|^2)(W_u)du = \int_0^s R(W_u)du  \approx \tau\sum_{i=1}^{N-1}R(w_i)
\end{align*}
and we arrive at the approximation (with a slight abuse of notation on $\mu_W$),
\begin{align}
\label{eq:approx}
    \int \varphi(x) \widehat{\mu}_s(dx)  \approx &\int \varphi(w_N)\exp\left(\frac{1}{2}(V_\pi(w_0) - V_\pi(w_N)) \right) \prod_{i=1}^{N-1} \exp \left( -\frac{\tau}{2} R(w_{i}) \right) \mu_W(d w_0,\ldots,d w_N).
\end{align}
The Wiener measure factorises as
\begin{align*}
\mu_W(d w_0,\ldots,d w_N)
=
\exp(-V_0(w_0))
\prod_{i=1}^{N}
q_\tau(w_{i-1},w_i)\,
dw_0\cdots dw_N,
\end{align*}
with
\begin{align*}
     q_{\tau}(w_{{i-1}}, w_{i}) &:= \frac{1}{(2\pi \tau)^{d/2}} \exp \left(- \frac{|w_{i} - w_{{i-1}}|^2}{2 \tau}  \right).
\end{align*}
Plugging the above into~\eqref{eq:approx} we obtain
\begin{align*}
   &\int \varphi(x) \widehat{\mu}_s (dx) \approx \\
   & \int \varphi(w_N)  \exp\left(\frac{1}{2}(V_\pi(w_0) - V_\pi(w_N)) \right) \prod_{i=1}^{N-1}
Q(w_i) \exp(-V_0(w_0))
\prod_{i=1}^{N}
q_\tau(w_{i-1},w_i)\,
dw_0\cdots dw_N
\end{align*}
with 
\begin{align*}
     Q(w) := \exp \left( -\frac{1}{2} R(w)\tau \right). 
\end{align*}
That is, $\hat{\mu}_s$ is approximated by (up to normalising constants) by a measure with density 
\begin{align*}
   \exp\left( -\frac{1}{2}V_\pi(w_{N})\right) \mu_N(w_N) 
\end{align*}
and also
\begin{align*}
     \mu_i(w_i) := \left\{\begin{array}{lr}
        \int Q(w_{{i-1}}) \, q_{\tau}(w_{{i-1}}, w_{i}) \, \mu_{i-1}(w_{i-1}) \, dw_{{i-1}}, &  i = 2, 3, \dots \\
         \int Q(w_{0}) \, q_{\tau}(w_{0}, w_{i}) \, \exp\left(\frac{1}{2}V_\pi(w_0)\right) \, \exp(-V_0(w_0)) \, dw_0, & i = 1.
        \end{array} \right.
\end{align*}
Then by Assumption \ref{ass:WLC}, for $i = 1$, we have, 
\begin{align*}
    \mu_1 &\propto \int \exp \left(- \frac{|w_0 - w_1|^2}{2\tau} + \frac{1}{2} V_\pi(w_0)  -\frac{\tau}{2}R(w_0) - V_0(w_0) \right) dw_0 \\
    &= \int \exp \left(- \frac{|w_0 - w_1|^2}{2\tau} - \left(V_0(w_0) -  \frac{1 + \delta }{2} V_\pi(w_0) \right) - \frac{\delta - \tau }{2} V_\pi(w_0)  -\frac{\tau}{2} \left(R(w_0) + V_\pi(w_0) \right)   \right) dw_0 \\
    & =: \int \exp(-f_1(w_1, w_0)) dw_0\\
    & = \exp(-\mathcal{G}_1(w_1)),
\end{align*}
assuming $\tau < \delta$.  Since $f_1$ is jointly strongly convex in $(w_0, w_1)$ under Assumption \ref{ass:WLC},  by the Pr\'ekopa--Leindler inequality \citep[Theorem 4.3, page 380]{MR450480}, so is $\mathcal{G}_1$ with 
\begin{align}
\label{eq:prekopleindler}
    \nabla_y^2 \mathcal{G}_1 \succeq \frac{\int  (\nabla_y^2 f_1 - \nabla_{yz}^2 f_1 (\nabla_z^2 f_1)^{-1} \nabla_{zy}^2 f_1  ) \exp(-f_1(y,z))dz}{\int \exp(-f_1(y,z))dz},
\end{align}
 using the shorthand notation $y:=w_1$ and $z:= w_0$. 
Then,
\begin{align*}
   \nabla_{w_{1}}^2 f_1 &= \frac{1}{\tau}I, \quad   
   \nabla_{w_{1}w_{0}}^2 f_1 = \nabla_{w_{0}w_{1}}^2 f_1 = -\frac{1}{\tau}I, \\
   \nabla_{w_{0}}^2 f_1 & = \nabla_{w_0}^2 (V_0 - \frac{1+\delta }{2} V_\pi) + \frac{\tau}{2} \nabla_{w_0}^2 \mathcal{H} + \frac{\delta - \tau}{2} \nabla_{w_0}^2 V_\pi    + \frac{1}{\tau}I \\
   & \succeq \left( \alpha_d + \frac{\tau}{2} \alpha_h + \frac{\delta - \tau}{2} \alpha_\pi + \frac{1}{\tau} \right) I
\end{align*}
So then 
\begin{align}
\nonumber 
     \nabla_y^2 \mathcal{G}_1 &\succeq \frac{\int  (\frac{1}{\tau} - \frac{1}{\tau^2} (\nabla_{w_0}^2 f_1)^{-1}   ) \exp(-f_1(y,z))dz}{\int \exp(-f_1(y,z))dz} \\
     \nonumber 
     & \succeq  \left(\frac{1}{\tau} - \frac{1}{\tau^2} \left(   \alpha_d + \frac{\tau}{2} \alpha_h + \frac{\delta - \tau}{2} \alpha_\pi + \frac{1}{\tau}  \right)^{-1} \right) I \frac{\int  \exp(-f_1(y,z))dz}{\int \exp(-f_1(y,z))dz} \\
    \label{eq:c1formlanew}
     & = \frac{\alpha_d + \frac{\delta}{2} \alpha_\pi + \frac{\tau}{2}(\alpha_h - \alpha_\pi)}{1 + \tau (\alpha_d + \frac{\delta}{2} \alpha_\pi + \frac{\tau}{2}(\alpha_h - \alpha_\pi))} I \\
     \nonumber 
     &=: c_1 I 
\end{align}
which is strictly positive by all the assumptions and since $\alpha_d + \frac{\delta}{2} \alpha_\pi + \frac{\tau}{2}(\alpha_h - \alpha_\pi) = \alpha_d + \frac{\tau}{2}\alpha_h + \frac{\delta - \tau}{2}\alpha_\pi > 0$ whenever $\delta > \tau$.  Then for $i=2$, 
\begin{align*}
    \mu_2 &\propto \int \exp \left( -\frac{| w_1 - w_2|^2}{2 \tau} -\frac{\tau}{2}R(w_1) - \mathcal{G}_1(w_1) \right) dw_1 \\
    & = \int \exp \left( -\frac{| w_1 - w_2|^2}{2 \tau} -\frac{\tau}{2}(R(w_1) + V_\pi(w_1)) + \frac{\tau}{2}V_\pi(w_1) - \mathcal{G}_1(w_1) \right) dw_1 \\
    &=:\int \exp \left( -f_2(w_1, w_2)\right) dw_1.
\end{align*}
Once again, $f_2$ is jointly strongly convex in $(w_1, w_2)$ and also 
\begin{align*}
    \nabla_{w_1}^2 f_2 &= \frac{1}{\tau}I + \frac{\tau}{2}\nabla_{w_1}^2 \mathcal{H}(w_1) -\frac{\tau}{2} \nabla_{w_1}^2 V_\pi(w_1) + \nabla_{w_1}^2 \mathcal{G}_1(w_1) \\
    & \succeq \left( \frac{1}{\tau} + \frac{\tau}{2}\alpha_h + c_1 - \frac{\tau}{2}L_\pi\right)I.
\end{align*}
A sufficient condition to maintain convexity is to choose $\tau$ small enough such that $c_1 - \frac{\tau}{2}L_\pi > 0$ (which is possible since $c_1 \rightarrow \alpha_d + \frac{\delta}{2} \alpha_\pi > 0$ as $\tau \rightarrow 0$).  Then again by Pr\'ekopa--Leindler, $\mu_2 \propto \int \exp(-f_2(w_1, w_2))dw_1 = \exp(-\mathcal{G}_2(w_2))$ with 
\begin{align*}
    \nabla^2_{w_2} \mathcal{G}_2 &\succeq \left( \frac{1}{\tau} - \frac{1}{\tau^2} \left( \frac{1}{\tau} + \frac{\tau}{2}\alpha_h + \kappa_1(\tau) \right)^{-1} \right)I  \\
    & = \frac{\kappa_1(\tau) + \frac{\tau}{2}\alpha_h}{1 + \tau(\kappa_1(\tau) + \frac{\tau}{2}\alpha_h)} I =: c_2 I
\end{align*}
where $\kappa_1(\tau) := c_1 - \frac{\tau}{2}L_\pi $.  Repeating for $\mu_3$, we have exactly the same computations, but with the requirement that $\tau$ is chosen small enough that $c_2 - \frac{\tau}{2}L_\pi > 0$, which by similar arguments as previously, holds true for some sufficiently small $\tau$.  By induction, we conclude that for all $i = 1, 2, 3, \dots, N$,  $\mu_i \propto \exp(-\mathcal{G}_i(w_i))$ is strongly log-concave with $\nabla_{w_i}^2 \mathcal{G}_i \succeq c_i I$ where 
\begin{align}
    \label{eq:ciiteration}
    c_i &= \frac{c_{i-1}+ \frac{\tau}{2}(\alpha_h - L_\pi)}{1 + \tau (c_{i-1}+ \frac{\tau}{2}(\alpha_h - L_\pi))},  \quad i = 1, 2, 3, \dots, N \\
    c_0 &= \alpha_d + \frac{\delta}{2} \alpha_\pi.
\end{align} 
Notice that \eqref{eq:ciiteration} holds for $i=1$ since Since $x\mapsto x/(1+\tau x)$ is increasing on $x>-1/\tau$ and $\alpha_\pi<L_\pi$; replacing $\alpha_\pi$ by $L_\pi$ in \eqref{eq:c1formlanew} only weakens the lower bound.

Finally, returning to our original approximation, we have that 
\begin{align*}
    \widehat{\mu}_t \approx & \exp \left(-\frac{1}{2} V_\pi (w_{N})  \right)\mu_N(w_N)   \propto \exp \left( -\frac{1}{2} V_\pi (w_{N}) - \mathcal{G}_N \right) 
\end{align*}
and 
\begin{align}
\label{eq:finalcn}
     \frac{1}{2} \nabla^2V_\pi (w_{N}) + \nabla^2\mathcal{G}_N(w_N)  \succeq \left(\frac{1}{2}\alpha_\pi  + c_N \right) I.
\end{align}
In order to understand the limit $\tau \rightarrow 0$ of the recursion \eqref{eq:ciiteration}, notice that it can be seen as a two step time discretisation, i.e.  
\begin{align}
    \label{eq:splitciter1}
    \tilde{c}_i &= c_{i-1} \pm \tau b^2 \\
     \label{eq:splitciter2}
    c_i &= \frac{\tilde{c}_i}{1 + \tau \tilde{c}_i}
\end{align}
with $b^2 = \frac{|\alpha_h - L_\pi|}{2}$ and the plus or minus in \eqref{eq:splitciter1} depending on the sign of $\alpha_h - L_\pi$.  Also, \eqref{eq:splitciter1} corresponds to an Euler discretisation of the ODE $\frac{dc_s}{ds} = \pm b^2$ with time step $\tau$.  Notice also that \eqref{eq:splitciter2} is the exact solution of $\frac{dc_s}{ds} = -c_s^2$ initialised at $\tilde{c}_i$ at time $s = \tau$ since if $c_s = c_0(1+ s c_0)^{-1}$ then differentiating with respect to $s$ yields $\dot{c}_s = -c_0^2(1 + s c_0)^{-2} = -c_s^2$.  Therefore, the iteration \eqref{eq:splitciter1}-\eqref{eq:splitciter2} corresponds to an operator splitting based discretisation of the ODE 
\begin{align}
    \label{eq:logconcWode}
    \frac{dc_s}{ds} = -c_s^2 \pm  b^2.
\end{align}
Notice that for the case $\frac{dc_s}{ds} = -c_s^2 -  b^2$, the ODE has no stable fixed point, and continues decreasing to $-\infty$ as $t \rightarrow \infty$. 
For this case, we focus on characterising the time horizon over which strong convexity is preserved.  
Integrating both sides of \eqref{eq:logconcWode} yields 
\begin{align}
    \label{eq:ctau}
    c_s &= b \tan \left(\tan^{-1} \left( \frac{c_0}{b} \right) - b s  \right), \\ 
    c_0 &=  \alpha_d + \frac{\delta}{2} \alpha_\pi 
\end{align}
where $b := \sqrt{\frac{|\alpha_h - L_\pi|}{2}}$, noting this excludes the final $\frac{1}{2}\alpha_\pi$ term. 
Finally, we have that
\begin{align*}
    \widehat{\mu}_t &\propto \exp \left( -\frac{1}{2} V_\pi - \mathcal{G}_{t}\right)  =: \exp \left( - \mathcal{E}_t \right) \\
    \nabla^2 \mathcal{E}_t & \succeq \left(c_{t}(c_0) + \frac{1}{2}\alpha_\pi \right) I.
\end{align*}
Now there exists some $s^\ast$ beyond which strong convexity of the intermediate densities is lost, that is, $c_s \leq 0$ for all $s \geq s^\ast$.  Due to the monotonocity of the tan function, $s^\ast$ is easily obtained as 
\begin{align}
\label{eq:tauthreshc0}
     s^\ast &= \frac{1}{b}\tan^{-1}\left( \frac{c_0}{b} \right) - \frac{1}{b}\tan^{-1} \left(0 \right)  = \frac{1}{b}\tan^{-1}\left( \frac{c_0}{b} \right) 
\end{align}
which is strictly positive since $b, c_0 > 0$.  A change of time-scale back to $s = 2t$ to obtain the concavity constants for \eqref{eq:overdamped} yields the result \eqref{eq:ctWflow} and the time until this holds is given by $t^\ast = \frac{1}{2b} \tan^{-1} \left( \frac{c_0}{b}\right)$.   

Now consider the case when $\alpha_h - L_\pi > 0$, for which the corresponding ODE of interest is given by 
\begin{align*}
    \frac{dc_s}{ds} = -c_s^2 + b^2. 
\end{align*}
A simple separation of variables calculation yields 
\begin{align*}
    c_s &= b \left( \frac{K- e^{-2bs} }{K  + e^{-2bs} } \right), \qquad
    K  = \frac{b + c_0}{b - c_0}.
\end{align*}
Once again, applying the time-rescaling $s = 2t$ yields \eqref{eq:ctWflowuniform} and $c_t > 0$ for all $t$.

\subsection{Proof of Theorem \ref{theo:logconc}}
\label{app:logconcuni}

To establish this result we consider a splitting scheme for the WFR flow \citep{us_splitting}.
Consider a time-discretisation $t_0 = 0 < t_1 < t_2 \cdots < t_M = T$ with $t_{i+1} - t_i = \gamma \enskip \forall \enskip i = 1, 2, \dots M$.  A sequential splitting applied to \eqref{eq:WFRpde} takes the form 
\begin{align}
\label{eq:sequential_split}
    \hat{\mu}_i(x; \gamma) &= S_\W(\gamma,  \mu_{i-1})(x) \rightarrow \mu_i(x; \gamma) = S_\F(\gamma,\hat{\mu}_{i})(x), \quad i = 1,2, 3, \dots  
\end{align}
with initial distribution $\mu_0$ and
where $S_\F(\gamma, v)$ denotes the solution operator corresponding to $f_{\F}$ in \eqref{eq:fr_semigroup} acting on $v$ over time interval of size $\gamma$ and likewise $S_\W(\gamma, v)$ denotes the solution operator corresponding to $f_\W$.  We refer to~\eqref{eq:sequential_split} as Wasserstein then Fisher--Rao scheme, or W-FR for short.  We note that the step size $\gamma$ for the W and FR components.   
It is known that $\mu_n$ converges to the true WFR solution $\mu_t$ at rate $\mathcal{O} (\gamma)$ under mild conditions on $f_\F$, $f_\W$ and $\pi$ (e.g. \cite{Hundsdorfer2003}). 

We first consider the case $\alpha_h - L_\pi < 0$.  We first show that strong log-concavity is preserved for a sequence of steps with step size $\{\tau_i\}_{i=1, 2, \dots}$, and then show that strong log-concavity is preserved as $\tau_i \rightarrow 0$, over an infinite time horizon.  As will become clear in the remainder of the proof, such an iteration dependent step size is needed due to the fact that the time horizon over which the W flow preserves strong log-concavity depends on the relative convexity of the initial and target potentials.

For a given strongly log-concave $\mu_0$ satisfying Assumptions \ref{ass:WLC}, we have that after a single step of the W flow of size $\tau_1$, the distribution $\widehat{\mu}_1$ is $\widehat{\alpha}_1$-strongly log-concave due to Lemma \ref{lem:Wlogconc} with potential $\mathcal{E}_{1} = \frac{1}{2}V_\pi + \mathcal{G}_{1}$, 
where $\nabla^2 \mathcal{E}_{1} \succeq \widehat{\alpha}_{1}I$ and $\nabla^2 \mathcal{G}_{1} \succeq \widehat{c}_{1}I$, $\widehat{\alpha}_1 = \widehat{c}_1 + \frac{\alpha_\pi}{2}$ and  
\begin{align*}
    \widehat{c}_{1} &= b \tan \left(\tan^{-1}\left( \frac{c_0}{b} \right) - 2b \tau_1  \right) \\
    c_0 &= \alpha_d + \frac{\delta_0}{2} \alpha_\pi,
\end{align*}
where $0 < \delta_0 < 1$ and $\tau_1$ must be ``small enough'' that $\widehat{c}_1 > 0$, and due to Lemma \ref{lem:Wlogconc} and the assumed conditions, such a $\tau_1 > 0$ exists.  For the FR step, it trivially holds that 
\begin{align*}
    \mu_1 &\propto \pi^{1 - e^{-\tau_1}}\widehat{\mu}_1^{e^{-\tau_1}}  \propto \exp \left(-(1-e^{-\tau_1})V_\pi -e^{-\tau_1}\mathcal{E}_1  \right)  =: \exp(-\mathcal{F}_1)
\end{align*}
where  $\mathcal{F}_1 = V_\pi + e^{-\tau_1}(\mathcal{E}_1 - V_\pi) = V_\pi + e^{-\tau_1}(\mathcal{G}_{1} - \frac{1}{2}V_\pi)$ yields that the distribution from a single sequential split step of size $\tau_1$ is strongly log-concave with constant 
\begin{align*}
    \alpha_1 = \alpha_\pi + e^{-\tau_1}(\widehat{\alpha}_1 - \alpha_\pi) = \left(1 - \frac{1}{2}e^{-\tau_1} \right)\alpha_\pi + e^{-\tau_1} \widehat{c}_1.  
\end{align*}
For $i=2$,  we require that $\mathcal{F}_1 - \frac{1+\delta_1}{2}V_\pi $ is strongly convex to apply Lemma \ref{lem:Wlogconc}, for some $0 < \delta_1 < 1$.  Note that we use a different $\delta_1$ as compared to $\delta_0$ in the first step, as this will be used to define the sequence of step sizes.  Then
\begin{align*}
   \nabla^2 (\mathcal{F}_1 - \tfrac{1+\delta_1}{2}V_\pi) &= \nabla^2 \left( \left( 1 - \tfrac{1+\delta_1}{2} \right) V_\pi + e^{-\tau_1}(\mathcal{G}_{1} - \tfrac{1}{2}V_\pi) \right),  \\
   &= \nabla^2 \left( \left( \tfrac{1-\delta_1 - e^{-\tau_1}}{2} \right) V_\pi + e^{-\tau_1}\mathcal{G}_{1} \right) \\
   & \succeq \left(\left( \tfrac{1-\delta_1 - e^{-\tau_1}}{2} \right) \alpha_\pi + e^{-\tau_1} \widehat{c}_{1} \right) I =: \alpha_{d,1} I 
\end{align*}
where $\alpha_{d,1}$ denotes the concavity parameter required for Assumption~\ref{ass:WLC}\ref{ass:WLC1} in Lemma \ref{lem:Wlogconc}.  A sufficient condition for strict positivity of $\alpha_{d,1}$ is to ensure $0< \delta_1 < 1 - e^{-\tau_1}$. Then $\widehat{\mu}_2$ is again strongly convex by Lemma \ref{lem:Wlogconc} with potential $\mathcal{E}_2 = \frac{1}{2}V_\pi + \mathcal{G}_{2}$, $\nabla^2 \mathcal{G}_{2} \succeq \widehat{c}_{2}I$ and 
\begin{align*}
      \widehat{c}_{2} &= b \tan \left(\tan^{-1}\left( \tfrac{c_1}{b} \right) - 2b \tau_2  \right) \\
    c_1 &=  \left( \tfrac{1-\delta_1 - e^{-\tau_1}}{2} \right) \alpha_\pi + e^{-\tau_1} \widehat{c}_{1}+ \tfrac{\delta_1}{2} \alpha_\pi \\
    & = \tfrac{1}{2}\alpha_\pi + e^{-\tau_1} (\widehat{c}_{1} -\tfrac{\alpha_\pi}{2}) 
\end{align*}
and $c_1 > 0$ since $\tau_1$ is chosen such that $\widehat{c}_{1} > 0$.  Once again, $\tau_2$ must be chosen such that $\widehat{c}_{2} > 0$.  Then once again for the FR step, we have $ \mu_2 \propto \exp (-\mathcal{F}_2)$ with 
\begin{align*}
    \mathcal{F}_2 &= (1-e^{-\tau_2})V_\pi + e^{-\tau_2}\mathcal{E}_2 \\
    & = V_\pi + e^{-\tau_2}(\mathcal{G}_{2} - \frac{1}{2}V_\pi)
\end{align*}
and $\nabla^2 \mathcal{F}_2 \succ \alpha_2 I$ where $ \alpha_2 = \left(1 - \frac{1}{2}e^{-\tau_2} \right)\alpha_\pi + e^{-\tau_2}  \widehat{c}_2$.
Proceeding inductively for $i = 3, 4, \dots $, in a similar way with required conditions on $\tau_i$ and $\delta_i$, we obtain 
\begin{align}
\label{eq:alphafinalrecurs}
    \alpha_i &= \left(1 - \tfrac{1}{2}e^{-\tau_i} \right)\alpha_\pi + e^{-\tau_i} \widehat{c}_{i}  = c_i + \frac{\alpha_\pi}{2}, \quad i = 1, 2, 3, \dots 
\end{align}
where $\widehat{c}_{{i}}$ and $c_i$ satisfy the recursion for $i= 1, 2, 3, \dots$ 
\begin{align}
    \label{eq:chatfinalrecurs}
    \widehat{c}_i &=  b \tan \left(\tan^{-1}\left( \tfrac{c_{i-1}}{b} \right) - 2b \tau_i  \right) \\
    \label{eq:omegafinalrecurs}
    c_{i} &= (1-e^{-\tau_{i}})\frac{\alpha_\pi}{2} + e^{-\tau_{i}}\widehat{c}_{i} \\
    \label{eq:chatfinalrecurs0cond}
     c_0 &= \alpha_d + \frac{\delta_0}{2} \alpha_\pi.
\end{align}

Similarly to the limiting analysis in the proof of Lemma \ref{lem:Wlogconc}, we have that \eqref{eq:chatfinalrecurs}-\eqref{eq:omegafinalrecurs} corresponds to a splitting scheme of the ODE
\begin{align}
    \label{eq:codeexactwfr}
    \dot{c}_t = -2c_t^2  -c_t -2b^2 + \frac{\alpha_\pi}{2},
\end{align}
whereby for the $i$th iteration, the ODE $\dot{c}_t = -2c_t^2  -2b^2$, initialised at $c_{{i-1}}$ is solved over time $\tau_i$ yielding $\widehat{c}_{i}$ and then the ODE $  \dot{c}_t =   -c_t + \frac{\alpha_\pi}{2}$ initialised at $\widehat{c}_{i}$ is solved over time $\tau_i$ yielding $c_{i}$.  
 Finally, due to \eqref{eq:alphafinalrecurs}, $\alpha_t = c_t + \frac{\alpha_\pi}{2}$, is the log-concavity constant of $\mu_t$, the solution of the exact WFR PDE \eqref{eq:WFRpde} at time $t$, when $c_t > 0$. 
 
We now make precise the conditions on $\tau_i$ such that \eqref{eq:codeexactwfr} has a strictly positive solution for any $t > 0$.  Recall that $\tau_i$ must be chosen such that $\widehat{c}_i > 0$, which using \eqref{eq:tauthreshc0}, is possible if $0< \tau_i < \tau_i^\ast$ where 
\begin{align}
\label{eq:tauicond}
    \tau_i^\ast & = \frac{1}{2b} \tan^{-1} \left( \frac{c_{i-1}}{b} \right), \quad i = 1, 2, 3, \dots  
\end{align}
Notice that there exists a $\tau_1^\ast > 0$ whenever $\alpha_d, \delta_0 > 0$, which implies $\widehat{c}_1 > 0$ and $c_1 > \left(1 - e^{-\tau_{1}}  \right) \frac{\alpha_\pi}{2}$ and $\tau_2^\ast >  \frac{1}{2b} \tan^{-1} \left(  \frac{(1 - e^{-\tau_{1}})\alpha_\pi}{2b}  \right) > 0$.  This then implies $\widehat{c}_2 > 0$ and once again $c_2 > \left(1 - e^{-\tau_{2}} \right) \frac{\alpha_\pi}{2} $ and $\tau_3^\ast >  \frac{1}{2b} \tan^{-1} \left(  \frac{(1 - e^{-\tau_{2}})\alpha_\pi}{2b}  \right) > 0$. By induction,  $\tau_i^\ast >  \frac{1}{2b} \tan^{-1} \left(  \frac{(1 - e^{-\tau_{i-1}})\alpha_\pi}{2b}  \right) > 0$ and since $\tau_{i-1} < \tau_{i-1}^\ast$, we can instead consider the recursion 
\begin{align}
    \label{eq:taustarrecursion}
    \tau^\ast_i =  \frac{1}{2b} \tan^{-1} \left(  \frac{(1 - e^{-\tau_{i-1}^\ast})\alpha_\pi}{2b}  \right), \enskip i = 1, 2, 3, \dots  
\end{align}
It is not difficult to see that $\tau_i^\ast > 0$ for all $i = 1, 2, 3, \dots$.  Its limit, $\tau_\infty^\ast$ can be found from the fixed point equation $\tau^\ast_\infty =  g(\tau_\infty^\ast)$ where $g(\tau) := \frac{1}{2b} \tan^{-1} \left(  \frac{(1 - e^{-\tau})\alpha_\pi}{2b}  \right)$, for which $\tau_\infty^\ast = 0$ is a valid (but not unique) solution.  Now $g(\tau)$ is a Lipschitz function since $0 < \frac{dg(\tau)}{d\tau} = \frac{\alpha_\pi}{4b^2} \frac{e^{-\tau}}{1 + \left( \frac{(1 - e^{-\tau})\alpha_\pi}{2b} \right)^2} \leq \frac{\alpha_\pi}{4b^2}$ for all $\tau \geq 0$.  To see why condition \eqref{ass:b2} is necessary, consider the case $\frac{\alpha_\pi}{4b^2} < 1$.  Then by Lipschitz continuity of $g$,  $\tau_{i+1}^\ast = |g(\tau_i^\ast) - g(0)| \leq \frac{\alpha_\pi}{4b^2}|\tau_i^\ast - 0|$ since $g(0) = 0$ and $\tau_i^\ast > 0$ for all $i$.  By induction, we have that $\tau_{i}^\ast \leq \left(\frac{\alpha_\pi}{4b^2} \right)^i \tau_0^\ast$.  Since $\sum_{i=1}^\infty \left(\frac{\alpha_\pi}{4b^2} \right)^i = \frac{1}{1 - \frac{\alpha_\pi}{4b^2}}$  whenever $\frac{\alpha_\pi}{4b^2} < 1$, it holds that $\sum_{i=1}^\infty \tau_i < \sum_{i=1}^\infty \tau_i^\ast \leq \frac{1}{1 - \frac{\alpha_\pi}{4b^2}}$, so that it is impossible to construct a sequence of thresholds $\{\tau_i^\ast \}$ spanning an infinite time horizon. To see why condition \eqref{ass:b2} is sufficient for an infinite time control, we show that there exists a $\tau_\infty^\ast > 0$ whenever $\tau_1 \neq 0$.  First note that $\frac{dg(0)}{d\tau} = \frac{\alpha_\pi}{4b^2} > 0$, meaning $\tau = 0$ is an unstable fixed point.  Consider $h(\tau):= g(\tau) - \tau$ and note that $h(\tau) \rightarrow -\infty$ as $\tau \rightarrow \infty$ since $g(\tau) < \frac{1}{2b} \tan^{-1}\left( \frac{\alpha_\pi}{2b}\right) $  for all $\tau \geq  0$.  Additionally, $h(0) = 0$ and $\frac{dh(0)}{d\tau} = \frac{\alpha_\pi}{4b^2} - 1 > 0$ under condition \eqref{ass:b2} and $\frac{dh(\tau)}{d \tau} \rightarrow -1$ as $\tau \rightarrow \infty$ monotonically.  Then by continuity of $h$ and $g$, there must exist a $\tau > 0$ such that $h(\tau) = 0$, i.e. there exists a $\tau_\infty^\ast > 0$.   That is, we may construct a sequence of step sizes $\{ \tau_i\}_{i=1, 2, 3, }$ with $0< \tau_i < \min(\tau_1^\ast, \tau_\infty^\ast \neq 0)$ for all $i = 1, 2, 3, \dots$ as a discretisation of the interval $[0, T]$ for all $T> 0$, for which the recursion \eqref{eq:chatfinalrecurs}-\eqref{eq:omegafinalrecurs}  converges to the continuous time ODE \eqref{eq:codeexactwfr} as $\max_i \tau_i \rightarrow 0$.  

\vspace{0.5cm}
For the case $\alpha_h - L_\pi > 0$, following a similar line of reasoning yields 
\begin{align*}
    \dot{c}_t = -2c_t^2 - c_t + 2b^2 + \frac{\alpha_\pi}{2}. 
\end{align*}
Since this case corresponds to a uniform in time preservation of log-concavity under the W flow, there are no further restrictions when considering the WFR flow. 

Finally, we have that for both $\alpha_h - L_\pi < 0$ and $\alpha_h - L_\pi > 0$, we must solve  
\begin{align}
\label{eq:ctode}
    \dot{c}_t &= -2c_t^2 - c_t + r 
\end{align}
where $r > 0$ in both cases.  Using the change of variable $y_t = c_t + \frac{1}{4}$ and solving \eqref{eq:ctode} via separation of variables, along with the fact that $\alpha_t = \frac{\alpha_\pi}{2} + c_t$ yields \eqref{eq:logconcuniform}.


\bibliographystyle{plainnat}
\bibliography{bibrefs}

\end{document}